\def\R{\mathbb{R}}
\def\U{\mathbb{U}}
\def\V{\mathbb{V}}
\def\G{\mathcal{G}_R}
\def\GO{\mathcal{G}}
\def\S{\mathbb{S}}
\def\RR{\mathscr{R}}
\def\Diff{\mathcal{T}}
\newcommand{\Sk}{\widehat{\mathbb{S}}^{(k)}}
\newcommand{\Vk}{\widehat{\mathbb{V}}^{(k)}}
\newtheorem{theorem}{Theorem}[section]
\newtheorem{lemma}[theorem]{Lemma}
\newtheorem{definition}[theorem]{Definition}
\newtheorem{remark}[theorem]{Remark}
\newtheorem{example}{Example}
\begin{document}

\title{Radon Cumulative Distribution Transform Subspace Modeling for Image Classification}

\author{Mohammad Shifat-E-Rabbi*, Xuwang Yin$^\dagger$, Abu Hasnat Mohammad Rubaiyat$^\dagger$, Shiying Li, Soheil Kolouri, Akram Aldroubi, Jonathan M. Nichols, and Gustavo K. Rohde
\thanks{*M. Shifat-E-Rabbi and S. Li are with the Department of Biomedical Engineering, University of Virginia, Charlottesville, VA 22908, USA (e-mail: *mr2kz@virginia.edu, sl8jx@virginia.edu).}
\thanks{X. Yin and A. H. M. Rubaiyat are with the Department of Electrical and Computer Engineering, University of Virginia, Charlottesville, VA 22904, USA (e-mail: xy4cm@virginia.edu, ar3fx@virginia.edu).}%
\thanks{S. Kolouri is with the HRL Laboratories, LLC, Malibu, CA 90265, USA (e-mail: skolouri@hrl.com).}%
\thanks{A. Aldroubi is with the Department of Mathematics, Vanderbilt University, Nashville, TN 37212, USA (e-mail: akram.aldroubi@vanderbilt.edu).}%
\thanks{J. M. Nichols is with the U.S. Naval Research Laboratory, Washington, DC 20375, USA (e-mail: jonathan.nichols@nrl.navy.mil).}%
\thanks{ G. K. Rohde is with the Department of Biomedical Engineering and the Department of Electrical and Computer Engineering, University of Virginia, Charlottesville, VA 22908, USA (e-mail: gustavo@virginia.edu).}%
\thanks{* indicates corresponding author, $\dagger$ indicates equal contribution.}
\thanks{ }
\thanks{\textcopyright{ \it{Journal of Mathematical Imaging and Vision}} (2021) 63:1185–1203. Permission from the journal must be obtained for all uses.}
}

\maketitle

\begin{abstract}
We present a new supervised image classification method applicable to a broad class of image deformation models. The method makes use of the previously described Radon Cumulative Distribution Transform (R-CDT) for image data, whose mathematical properties are exploited to express the image data in a form that is more suitable for machine learning. While certain operations such as translation, scaling, and higher-order transformations are challenging to model in native image space, we show the R-CDT can capture some of these variations and thus render the associated image classification problems easier to solve. The method -- utilizing a nearest-subspace algorithm in R-CDT space -- is simple to implement, non-iterative, has no hyper-parameters to tune, is computationally efficient, label efficient, and provides competitive accuracies to state-of-the-art neural networks for many types of classification problems. In addition to the test accuracy performances, we show improvements (with respect to neural network-based methods) in terms of computational efficiency (it can be implemented without the use of GPUs), number of training samples needed for training, as well as out-of-distribution generalization. The Python code for reproducing our results is available at \cite{software}. 


\end{abstract}
\begin{IEEEkeywords}
R-CDT, nearest subspace, image classification, generative model.
\end{IEEEkeywords}

\IEEEpeerreviewmaketitle

\section{Introduction}

\IEEEPARstart{I}{mage} classification refers to the process of automatic image class prediction based on the numerical content of their corresponding pixel values. Automated image classification methods have been used to detect cancer from microscopy images of tumor specimens \cite{sertel2009computer}\cite{basu2014detecting}, detect and quantify atrophy from magnetic resonance images of the human brain \cite{kundu2018discovery}\cite{schulz2010visualization},  identify and authenticate a person from cell phone camera images \cite{hadid2007face}, and numerous other applications in computer vision, medical imaging, automated driving and others. 

While many methods for automated image classification have been developed, those based on supervised learning have attracted most of the attention given that {\it a priori} knowledge of the image data usually leads to more accurate classifiers than the unsupervised alternatives. In supervised learning, a set of labeled example images (known as training data) is utilized to estimate the value of parameters of a mathematical model to be used for classification. Given an unknown test image, the goal of the classification method is to automatically assign the label or class of that image. 

An extensive set of supervised learning-based classification algorithms have been proposed in the past (see \cite{shifat2020cell,rawat2017deep,lu2007survey} for a few reviews on the subject). Two broad categories of these algorithms are: 1) learning of classifiers on hand-engineered features and 2) end-to-end learning of features and classifiers, e.g., hierarchical neural networks. Certainly, many algorithms exist that may fit into more than one category, while other algorithms may not fit into any. However, for the purposes of our discussion we focus on these two broad categories.

Image classification methods based on {\it{hand-engineered features}}, perhaps the first to arise \cite{prewitt1966analysis}, generally work in a two step process: step one being the extraction of numerical features that model the pixel intensities, and step two being the application of statistical classification methods to those features. A large number of numerical features have been engineered in the past to represent the information from a given image, including Haralick features, Gabor features, shape features \cite{orlov2008wnd}\cite{ponomarev2014ana}, and numerous others \cite{shifat2020cell}. These are then combined with many different multivariate regression-based classification methods including linear discriminant analysis \cite{bandos2009classification} \cite{muldoon2010evaluation}, support vector machines \cite{zhang2007local} \cite{perronnin2010improving}, random forests \cite{bosch2007image} \cite{du2015random}, as well as their kernel versions.

\begin{figure*}[!hbt]
    \centering
    \includegraphics[width=18cm]{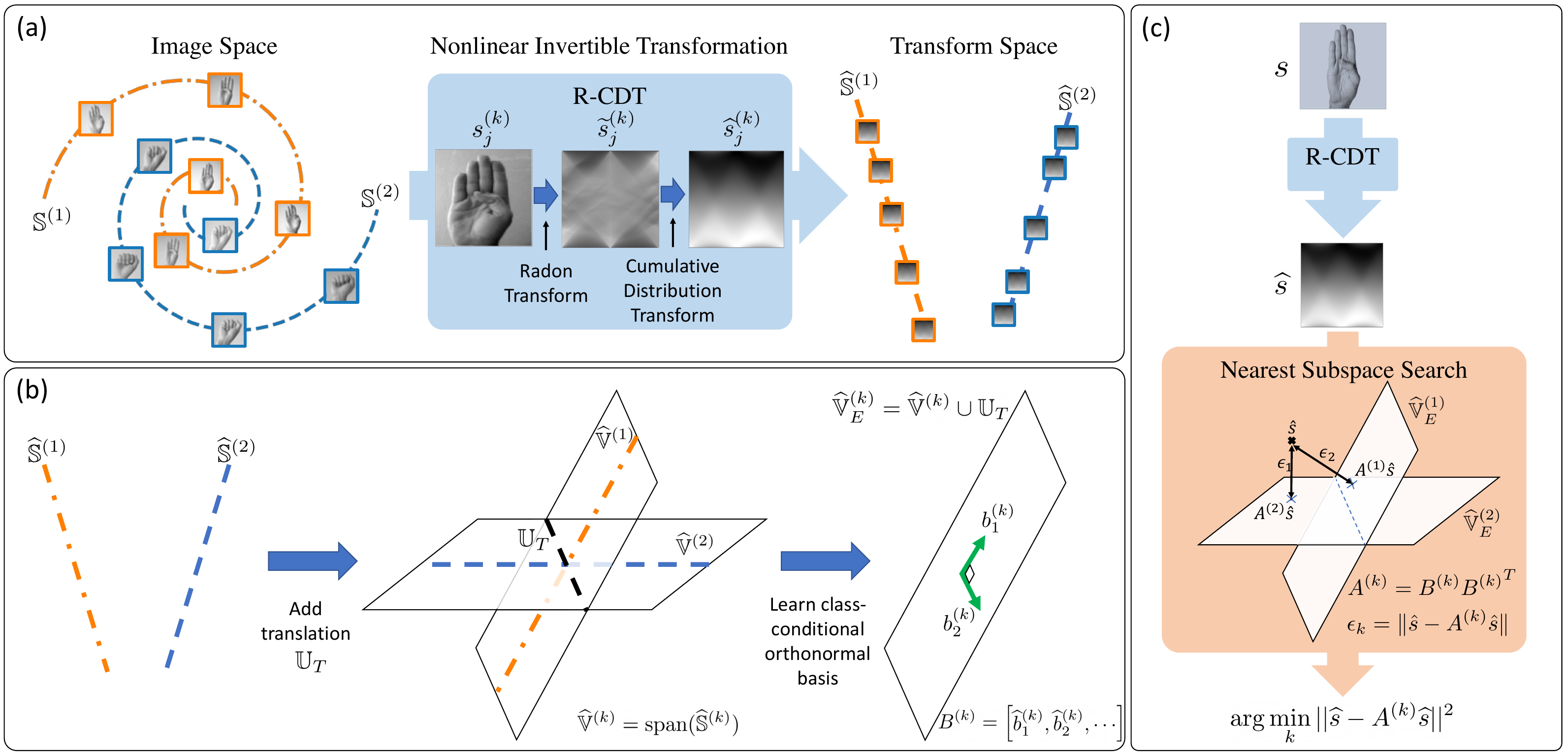}
    \caption{System diagram outlining the proposed Radon cumulative distribution transform subspace modeling technique for image classification. (a) R-CDT - a nonlinear, invertible transformation: The R-CDT transform simplifies the data space; (b) Generative modeling - subspace learning: the simplified data spaces can be modeled as linear subspaces; (c) Classification pipeline: the classification method consists of the R-CDT transform followed by a nearest subspace search in the R-CDT space.}
    \label{figres00}
    \vspace{-1em}
\end{figure*}

Methods based on hierarchical neural networks \cite{lecun2015deep}, such as {\it{convolutional neural networks}} (CNNs) \cite{rawat2017deep} \cite{shin2016deep}, have been widely studied recently given they have achieved top performance in certain classification tasks \cite{lecun2015deep} \cite{szegedy2016rethinking} \cite{szegedy2015going}. In contrast to hand-engineered features, CNNs typically combine both feature extraction and classification methods within one consistent framework, i.e.,  end-to-end learning. 
The unprecedented performance of the deep neural networks on a wide variety of tasks has made them quintessential to modern supervised learning on images. These methods, however, are: 1) computationally expensive, often requiring graphic processing units (GPUs) to train and deploy, 2) data-hungry, requiring thousands of labeled images per class, and 3) often vulnerable against out-of-distribution samples, e.g., adversarial attacks. 

A less commonly used alternative is to model an observed image as the deformation of another image.  To this end, image ``morphing'' models have been designed to capture, for example, translation and scalings among two or more images \cite{wolberg1998morph}. Recently, a new class of general ``transport'' models have been developed which describe an image as a smooth, nonlinear, invertible transformation of a reference image \cite{kolouri2016radon,kolouri2017optimal}.  The estimation of such models from observed imagery is greatly facilitated by the R-CDT, a newly developed image transform \cite{kolouri2016radon}. Unlike most numerical feature based methods described above, this operation is invertible as the R-CDT is an invertible image transform and thus the R-CDT can be viewed as a mathematical image representation method. The R-CDT developed in \cite{kolouri2016radon} has connections to optimal transport theory \cite{kolouri2017optimal}\cite{villani2008optimal}. In particular, the R-CDT can be interpreted as the application of the linear optimal transportation concept \cite{wang2013linear} to the sliced Wasserstein distance \cite{kolouri2016sliced}. It can also be interpreted as a nonlinear kernel \cite{kolouri2016sliced}. 

The R-CDT, and linear optimal transport models \cite{wang2013linear}, have been applied to image classification before in combination with linear classifiers such as Fisher discriminant analysis, support vector machines, and their respective kernel techniques \cite{kolouri2016radon} \cite{kolouri2016sliced}\cite{park2018multiplexing}. While successful in certain applications \cite{park2018multiplexing}, this approach of classification using the R-CDT has failed to produce the state of the art classification results in certain other applications (see Figure~3 from \cite{shifat2020cell}). 

In this work we improve upon past performance and develop a new R-CDT approach to supervised image classification.  We first highlight the many useful mathematical properties of the R-CDT and then show the implications of these properties for the classification problem. We then leverage these properties to propose a new R-CDT classifier and demonstrate the performance of that classifier in numerous applications. Figure~\ref{figres00} shows a system diagram outlining the main computational modeling steps in the proposed method. Our specific contributions are therefore as follows:

\subsection*{Our contributions}
\begin{itemize}
    \item We propose a classification algorithm which offers competitive accuracy performance in comparison with deep learning based methods.
    \item The algorithm requires very few labeled data to train and outperforms deep learning by a large margin in limited training sample size setting. The proposed method is also exceptionally cheap in terms of computation; up to 10,000 times savings in computational complexity can be attained, as compared with the deep-learning-based methods, to achieve the same test accuracy.
    
    \item A particular compelling property of the proposed method is the robustness under out-of-distribution setups, meaning, our model generalizes to data that were previously unobserved. This property is a direct result of the Lemmas derived in section IV which speak to the convexity and separability of the data in transform space. 
    
    \item We arrive at the proposed algorithm by expanding and improving upon the R-CDT-based image classification technique. Utilizing the properties of the CDT \cite{park2018cumulative} and R-CDT \cite{kolouri2016radon} we propose that each class can be modeled as a convex subspace in R-CDT domain. We mathematically show that the data space in R-CDT domain do not intersect with the subspace corresponding to a different class. In light of these properties, the algorithm implements a nearest subspace search in R-CDT domain to classify the test images.
\end{itemize}

The paper is organized as follows: Section \ref{prelim} presents a preliminary overview of a family of transport-based nonlinear transforms: the CDT for 1D signals, and the R-CDT for 2D images. The classification problem is stated in Section \ref{plm_state} with the proposed solution in Section \ref{plm_soln}. Descriptions of the experimental setup and the datasets used are available in Section~\ref{mnm}. Experimental results are presented in Section \ref{result} with the discussion of the results in Section \ref{discuss}. Finally, Section \ref{conclude} offers concluding remarks.

\begin{table}[]
\centering
\caption{\normalsize{Description of symbols}}
\label{table:symbols}
\begin{tabular}{ll}
\hline
Symbols                & Description    \\ \hline
$s(x)~/~s(\mathbf{x})$ & Signal / image \\
$\Omega_s$& Domain of $s$\\
$\widetilde{s}(t,\theta)$ & Radon transform of $s$\\
$\widehat{s}(x)~/~\widehat{s}(t,\theta)$ & CDT / R-CDT transform of $s$\\
$\mathscr{R}(\cdot)~/~\mathscr{R}^{-1}(\cdot)$ & Forward / inverse Radon transform operation\\
$g(x)$ & Strictly increasing and differentiable function\\
$g^{\theta}(t)$ & Strictly increasing and differentiable \\& function, indexed by an angle $\theta$ \\
$s\circ g$&$s(g(x))$: composition of $s(x)$ with $g(x)$\\
$\widetilde{s}\circ g^\theta$&$\widetilde{s}(g^\theta(t),\theta)$: composition of $\widetilde{s}(t,\theta)$ with $g^{\theta}(t)$\\& along the $t$ dimension of $\widetilde{s}(t,\theta)$\\
$\GO$ &  Set of increasing diffeomorphisms $g(x)$ \\
$\G$ & Set of increasing diffeomorphisms $g^{\theta}(t)$ \\&parameterized by $\theta$ with $\theta \in [0,\pi]$\\
$\Diff$& Set of all possible increasing diffeomorphisms \\&from $\R$ to $\R$\\

\hline
\end{tabular}
\vspace{-1em}
\end{table}

\section{Preliminaries}
\label{prelim}
\subsection{Notation}
\label{prelim_a}

Throughout the manuscript, we deal with signals $s$ assuming these to be square integrable in their respective domains. That is, we assume that $\int_{\Omega_s} |s(x)|^2 dx < \infty$, where $\Omega_s\subseteq\R$ is the domain over which $s$ is defined. In addition, we at times make use of the common notation: $\| s \|^2 = <s, s> = \int_{\Omega_s} s(x)^* s(x) dx = \int_{\Omega_s} |s(x)|^2 dx $, where $<\cdot, \cdot>$ is the inner product. Signals are assumed to be real, so the complex conjugate $^*$ does not play a role.
We will apply the same notation for functions whose input argument is two dimensional, i.e. images. Let $\mathbf{x} \in \Omega_s \subseteq \R^2$. A 2D continuous function representing the continuous image is denoted $s(\mathbf{x}), \mathbf{x} \in \Omega_s$. Signals or images are denoted $s^{(k)}$ when the class information is available, where the superscript $(k)$ represents the class label.


Below we will also make use of one dimensional (1D) increasing diffeomorphisms (one to one mapping functions), which are denoted as $g(x)$ for signals and $g^\theta(t)$ when they need to be parameterized by an angle $\theta$. The set of all possible increasing diffeomorphisms from $\R$ to $\R$ will be denoted as $\Diff$. 
Finally, at times we also utilize the `$\circ$' operator to denote composition. 
A summary of the symbols and notation used can be found in Table~\ref{table:symbols}.

\subsection{The Cumulative Distribution Transform (CDT)}
\label{prelim_b}
The CDT \cite{park2018cumulative} is an invertible nonlinear 1D signal transform from the space of smooth probability densities to the space of diffeomorphisms. The CDT morphs a given input signal, defined as a probability density function (PDF), into another PDF in such a way that the Wasserstein distance between them is minimized. More formally, let $s(x),x\in\Omega_s$ and $r(x),x\in\Omega_r$ define a given signal and a reference signal, respectively, which we consider to be appropriately normalized such that $s>0,r>0$, and $\int_{\Omega_s} s(x)dx=\int_{\Omega_r} r(x)dx=1$. The forward CDT transform\footnote{We are using a slightly different definition of the CDT than in \cite{park2018cumulative}. The properties of the CDT outlined here hold in both definitions.} of $s(x)$ with respect to $r(x)$ is given by the strictly increasing function $\widehat{s}(x)$ that satisfies
\begin{align}
    \int_{-\infty}^{\widehat{s}(x)}s(u)du=\int_{-\infty}^{x}r(u)du\nonumber
\end{align}
As described in detail in \cite{park2018cumulative}, the CDT is a nonlinear and invertible operation, with the inverse being
\begin{align}
s(x)=\frac{d\widehat{s}^{-1}(x)}{dx}r\left(\widehat{s}^{-1}(x)\right), ~\mbox{and}~\widehat{s}^{-1}(\widehat{s}(x)) = x\nonumber
\end{align}

Moreover, like the Fourier transform \cite{bracewell1986fourier} for example, the CDT has a number of properties which will help us render signal and image classification problems easier to solve.\\\\
{\bf{Property II-B.1}}~{\it{(Composition)}}: Let $s(x)$ denote a normalized signal and let $\widehat{s}(x)$ be the CDT of $s(x)$. The CDT of $s_g=g's\circ g$ is given by 
\begin{align}
    \widehat{s}_g=g^{-1}\circ\widehat{s}
\end{align}
Here, $g\in\Diff$ is an invertible and differentiable function (diffeomorphism), $g^\prime=dg(x)/dx$, and `$\circ$' denotes the composition operator with $s\circ g=s(g(x))$. For a proof, see Appendix~A in supplementary materials.

\begin{figure}
    \centering
    \includegraphics[width=8.85cm]{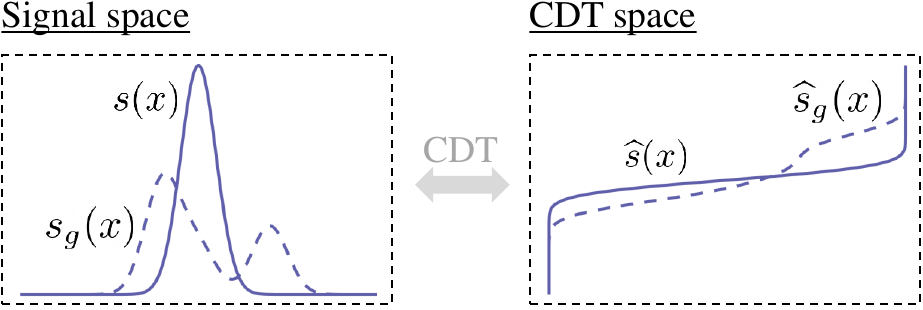}
    \caption{The cumulative distribution transform (CDT) of a signal (probability density function). Note that the CDT of an altered (transported) signal $s_g(x)$ (see text for definition) is related to the transform of $s$. In short, the CDT renders displacements into amplitude modulations in transform space.}
    \label{fig:cdt_comp}
    \vspace{-1em}
\end{figure}

The CDT composition property implies that, variations in a signal caused by applying $g(x)$ to the independent variable will change only the dependent variable in CDT space. This property is illustrated in Figure~\ref{fig:cdt_comp} where variations along both independent and dependent axis directions in original signal space become changes solely along the dependent axis in CDT space).\\\\
{\bf{Property II-B.2}} {\it{(Embedding)}}: 
CDT induces an isometric embedding between the space of 1D signals  with the 2-Wasserstein metric and the space of their CDT transforms with a weighted-Euclidean metric \cite{kolouri2016radon}\cite{park2018cumulative}, i.e.,
\begin{equation}\label{eq: cdtembedding}
	 W_2^2(s_1,s_2) =\left|\left|\left(\widehat{s}_1-\widehat{s}_2\right)\sqrt{r}\right|\right|_{L^2(\Omega_r)}^2,
\end{equation}
for all signals $s_1, s_2$. That is to say, if we wish to use the Wasserstein distance as a measure of similarity between $s_1,~s_2$, we can compute it as simply a weighted Euclidean norm in CDT space. For a proof, see Appendix~C in supplementary materials.

The property above naturally links the CDT and Wasserstein distances for PDFs. Wasserstein \cite{villani2008optimal} distances are linked to optimal transport and have been used in a variety of applications in signal and image processing and machine learning (see \cite{kolouri2017optimal} for a recent review). 


\subsection{The Radon transform}
\label{prelim_c}
The Radon transform of an image $s(\mathbf{x}),\mathbf{x}\in\Omega_s\subset \R^2$, which we denote by $\widetilde{s}=\RR(s)$, is defined as
\begin{eqnarray}
\widetilde{s}(t,\theta)&=&\int_{\Omega_s}s(\mathbf{x})\delta(t-\mathbf{x}\cdot \mathbf{\xi}_\theta)d\mathbf{x}
\end{eqnarray}
Here, $t$ is the perpendicular distance of a line from the origin and $\xi_\theta = [\cos(\theta),\sin(\theta)]^T$, where $\theta$ is the angle over which the projection is taken. 

Furthermore, using the Fourier Slice Theorem \cite{quinto2006introduction}\cite{natterer2001mathematics}, the inverse Radon transform $s=\RR^{-1}(\widetilde{s})$ is defined as
\begin{eqnarray}
s(\mathbf{x})&=&\int_0^\pi\int_{-\infty}^{\infty}\widetilde{s}(\mathbf{x}\cdot\xi_\theta-\tau,\theta)w(\tau)d\tau d\theta,
\end{eqnarray}
where $w$ is the ramp filter (i.e.,$(\mathscr Fw) (\xi) = |\xi|, \forall \xi$ ) and $\mathscr{F}$ is the Fourier transform.\\\\
{\bf{Property II-C.1}} ({\it{Intensity equality}}): Note that
\begin{eqnarray}
\int_{\Omega_s} s(\mathbf{x})d\mathbf{x} =\int_{-\infty}^{\infty}\widetilde{s}(t,\theta)dt, \;\;\;\;\; \forall\theta\in[0,\pi]
\label{eq:Rtheta}
\end{eqnarray}
which implies that $\int_{-\infty}^{\infty}\widetilde{s}(t,\theta_i)dt=\int_{-\infty}^{\infty}\widetilde{s}(t,\theta_j)dt$ for any two choices $\theta_i,\theta_j\in[0,\pi]$.

\subsection{Radon Cumulative Distribution Transform (R-CDT)}
\label{prelim_d}
The CDT framework was extended for 2D patterns (images as normalized density functions) through the sliced-Wasserstein distance in \cite{kolouri2016radon}, and was denoted as R-CDT. The main idea behind the R-CDT is to first obtain a family of one dimensional representations of a two dimensional probability measure (e.g., an image) through the Radon transform and then apply the CDT over the $t$ dimension in Radon transform space. More formally, let $s(\mathbf{x})$ and $r(\mathbf{x})$ define a given image and a reference image, respectively, which we consider to be appropriately normalized. The forward R-CDT of $s(\mathbf{x})$ with respect to $r(\mathbf{x})$ is given by the measure preserving function $\widehat{s}(t,\theta)$ that satisfies
\begin{align}\label{eq:rcdt}
    \int_{-\infty}^{\widehat{s}(t,\theta)}\widetilde{s}(u,\theta)du=\int_{-\infty}^{t}\widetilde{r}(u,\theta)du,~~~\forall\theta\in[0,\pi]
\end{align}

\begin{figure}
    \centering
    \includegraphics[width=8.85cm]{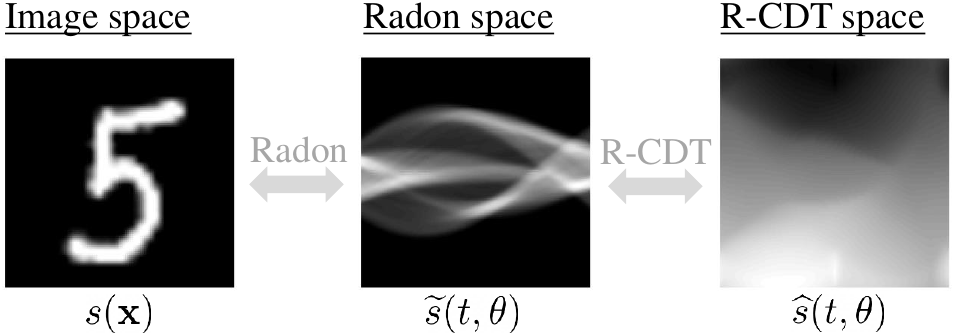}
    \caption{The process of calculating the Radon cumulative distribution transform (R-CDT) of an image $s(\mathbf{x})$ (defined as a 2-dimensional probability density function). The first step is to apply the Radon transform on $s(\mathbf{x})$ to obtain $\widetilde{s}(t,\theta)$. The R-CDT $\widehat{s}(t,\theta)$ is then obtained by applying the CDT over the $t$ dimension of $\widetilde{s}(t,\theta),~\forall\theta$.} 
    \label{fig:rcdt_process}
    \vspace{-1em}
\end{figure}

As in the case of the CDT, a transformed signal in R-CDT space can be recovered via the following inverse formula  \cite{kolouri2016radon}, 
\begin{align} 
s(\mathbf{x})=\RR^{-1}\left(\frac{\partial \widehat{s}^{-1}(t,\theta)}{\partial t}\widetilde{r}\left(\widehat{s}^{-1}(t,\theta),\theta\right)\right)\nonumber
\end{align}
The process of calculating the R-CDT transform is shown in Figure~\ref{fig:rcdt_process}. As with the CDT, the R-CDT has a couple of properties outlined below which will be of interest when classifying images.\\\\
{\bf{Property II-D.1}} {\it{(Composition)}}: Let $s(\mathbf{x})$ denotes an appropriately normalized image and let $\widetilde{s}(t,\theta)$ and $\widehat{s}(t,\theta)$ be the Radon transform and the R-CDT transform of $s(\mathbf{x})$, respectively. The R-CDT transform of $s_{g^\theta}=\mathscr{R}^{-1}\left(\left({g^\theta}\right)^\prime\widetilde{s}\circ {g^\theta}\right)$ is given by
\begin{align}
    \widehat{s}_{g^\theta}=({g^\theta})^{-1}\circ\widehat{s},
\end{align}
where $\left(g^\theta\right)^\prime=dg^\theta(t)/dt$, $\widetilde{s}\circ {g^\theta}:=\widetilde{s}(g^\theta(t),\theta)$, and $({g^\theta})^{-1}\circ\widehat{s}=({g^\theta})^{-1}(\widehat{s}(t,\theta))$.
Here for a fixed $\theta$, $g^\theta$ can be thought of an increasing and differentiable function with respect to $t$. The above equation hence follows from the composition property for 1D CDT. For a proof, see Appendix~B in supplementary materials.

The R-CDT composition property implies that, variations along both independent and dependent axis directions in an image, caused by applying $g^\theta(t)$ to the independent $t$ variable of its Radon transform, become changes solely along the dependent variable in R-CDT space.\\\\
{\bf{Property II-D.2}} {\it{(Embedding)}}:
R-CDT induces an isometric embedding between the space of images with  sliced-Wasserstein metric and the space of their R-CDT transforms with a weighted-Euclidean metric, i.e.,
\begin{equation}\label{eq: rcdtembedding}
	SW_2^2(s_1,s_2) = \left|\left|\left(\widehat{s}_1-\widehat{s}_2\right)\sqrt{\widetilde{r}}\right|\right|_{L^2(\Omega_{\widetilde r})}^2
\end{equation}
for all images $s_1$ and $s_2$. For a proof, see Appendix~D in supplementary materials. 

As the case with the 1D CDT shown above, the property above naturally links the R-CDT and sliced Wasserstein distances for PDFs and affords us a simple means of computing similarity among images \cite{kolouri2016radon}. We remark that throughout this manuscript we use the notation $\widehat s$ for both CDT or R-CDT transforms of a signal or image $s$ with respect to a fixed reference signal or image $r$, if a reference is not specified.



\section{Generative Model and Problem Statement}
\label{plm_state}
Using the notation established above we are ready to discuss a generative model-based problem statement for the type of classification problems we discuss in this paper. We begin by noting that in many applications we are concerned with classifying image or signal patterns that are instances of a certain prototype (or template) observed under some often unknown deformation pattern.
Consider the problem of classifying handwritten digits (e.g. the MNIST dataset \cite{lecun1998gradient}). A good model for each class in such a dataset is to assume that each observed digit image can be thought of as being an instance of a template (or templates) observed under some (unknown) deformation or similar variation or confound. For example, a generative model for the set of images of the digit 1 could be a fixed pattern for the digit 1, but observed under different translations -- the digit can be positioned randomly within the field of view of the image. Alternatively, the digit could also be observed with different sizes, or slight deformations. The generative models stated below for 1D and 2D formalize these statements.

\begin{example}[1D generative model with translation]
Consider a 1D signal pattern denoted as $\varphi^{(k)}$ (the superscript $(k)$ here denotes the class in a classification problem), observed under a random translation parameter $\mu$. In this case, we can mathematically represent the situation by defining the set of all possible functions $g(x) = x - \mu$, with $\mu$ being a random variable whose distribution is typically unknown. A random observation (randomly translated) pattern can be written mathematically as $g^\prime(x)\varphi^{(k)}(g(x))$. Note that in this case $g^\prime(x) = 1$, and thus the generative model simply amounts to random translation of a template pattern. Figure~\ref{fig:generative_model_example} depicts this situation.
\end{example}

The example above (summarized in Figure~\ref{fig:generative_model_example}) can be expressed in more general form. Let $\GO \subset \Diff$ denotes a set of 1D spatial transformations of a specific kind (e.g. the set of affine transformations). We then use these transformations to provide a more general definition for a mass (signal intensity) preserving generative data model.
\begin{figure}
    \centering
    \includegraphics[width=6cm]{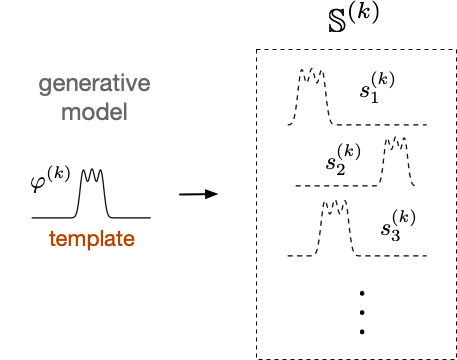}
    \caption{Generative model example. A signal generative model can be constructed by applying randomly drawn confounding spatial transformations, in this case translation $(g(x)=x-\mu)$, to a template pattern from class $(k)$, denoted here as $\varphi^{(k)}$. The notation $s_j^{(k)}$ here is meant to denote the $j^{\mbox{th}}$ signal from the $k^{\mbox{th}}$ class.}
    \label{fig:generative_model_example}
    \vspace{-1em}
\end{figure}

\begin{definition}[1D generative model]
\label{def:gen_model_1d}
    Let $\GO \subset \Diff$. The 1D mass (signal intensity) preserving generative model for the $k^{\mbox{th}}$ class is defined to be the set 
\begin{eqnarray}
\label{eq:1dgenerative_model}\mathbb{S}^{(k)}&=&\{s_j^{(k)}|s_j^{(k)}=g_j'\varphi^{(k)}\circ g_j, \forall g_j\in\GO \}.
\end{eqnarray}
\end{definition}
The notation $s_j^{(k)}$ here is meant to denote the $j^{\text{th}}$ signal from the $k^{\text{th}}$ class. The derivative term $g^\prime_j$ preserves the normalization of signals. This extension allows us to define and discuss problems where the confound goes beyond a simple translation model.\\

With the definition of the 2-Dimensional Radon transform from section \ref{prelim_c}, we are now ready to define the 2-dimensional definition of the generative data model we use throughout the paper:


\begin{definition}[2D generative model]
\label{def:gen_model_2d}
    Let $\G\subset\Diff$ be our set of confounds. The 2D mass (image intensity) preserving generative model for the $k^{\mbox{th}}$ class is defined to be the set 
\begin{eqnarray}
\mathbb{S}^{(k)}=\left\{s_j^{(k)}|s_j^{(k)}=\RR^{-1}\left(\left({g_j^\theta}\right)^\prime\widetilde{\varphi}^{(k)}\circ g^\theta_j\right), \forall g^\theta_j\in\G \right\}.
\label{eq:2dgenerative_model}
\end{eqnarray}
\end{definition}

\begin{figure}
    \centering
    \includegraphics[width=8.9cm]{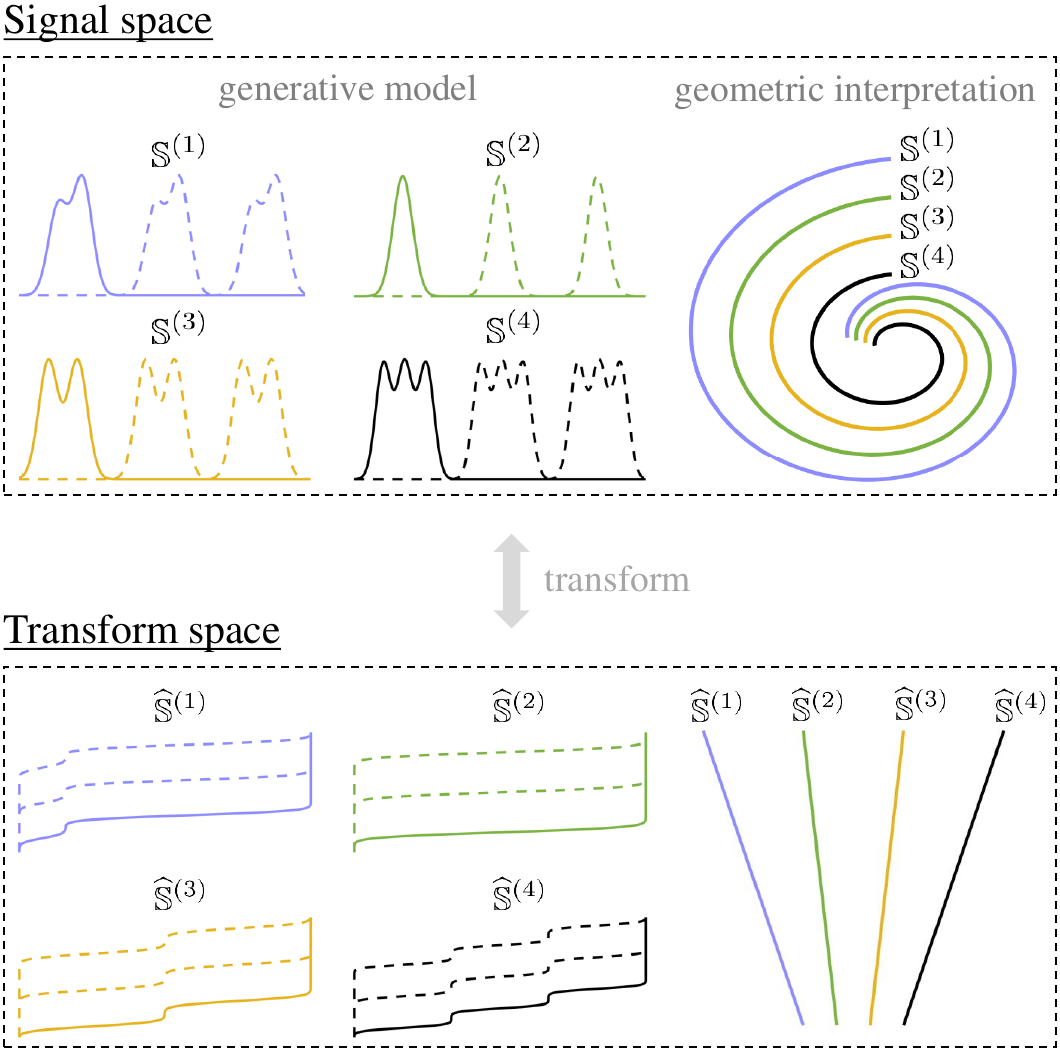}
    \caption{Generative model for signal classes in signal (top panel) and transform (bottom panel) spaces. Four classes are depicted on the left: ${\S}^{(1)}, {\S}^{(2)}, {\S}^{(3)}, {\S}^{(4)}$, each with three example signals shown. The top panel: it shows the signal classes in their corresponding native signal spaces. For each class, three example signals are shown under different translations. The right portion of the top panel shows the geometry of these four classes forming nonlinear spaces. The bottom panel: it depicts the situation in transform (CDT, or R-CDT) space. The left portion of the bottom panel shows the corresponding signals in transform domain, while the right portion shows the geometry of the signal classes forming convex spaces.}
    \label{fig:signal_transform_spaces}
    \vspace{-1em}
\end{figure}

We note that the generative model above can yield a non convex set, depending on the choice of template function $\varphi^{(k)}$ and confound category $\G$. Note that we use the same notation $\mathbb{S}^{(k)}$ for both 1D and 2D versions of the set. The meaning each time will be clear from the context.

We are now ready to define a mathematical description for a generative model-based problem statement using the definitions above:

\begin{definition}[Classification problem]
\label{def:classification}
Let $\G\subset\Diff$ and $\G$ define our set of confounds, and let $\mathbb{S}^{(k)}$ be defined as in equation~\eqref{eq:1dgenerative_model} (for signals) or equation~\eqref{eq:2dgenerative_model} (for images). Given training samples $\{s^{(1)}_1, s^{(1)}_2, \cdots\}$ (class 1), $\{s^{(2)}_1, s^{(2)}_2, \cdots\}$ (class 2), $\cdots$ as training data, determine the class $(k)$ of an unknown signal or image $s$.
\end{definition}

It is important to note that the generative model discussed yields nonconvex  (and hence nonlinear) signal classes (see Figure~\ref{fig:signal_transform_spaces}, top panel). We express this fact mathematically as: for arbitrary $s^{(k)}_i$ and $s_j^{(k)}$ we have that $\alpha s^{(k)}_i + (1-\alpha)s_j^{(k)}$, for $\alpha \in [0,1]$, may not necessarily be in $\mathbb{S}^{(k)}$. The situation is similar for images (the 2D cases). Convexity, on the other hand, means the weighted sum of samples {\it does} remain in the set; this property greatly simplifies the classification problem as will be shown in the next section.


\section{Proposed Solution}
\label{plm_soln}
We postulate that the CDT and R-CDT introduced earlier can be used to drastically simplify the solution to the classification problem posed in definition \ref{def:classification}. While the generative model discussed above generates  nonconvex (hence nonlinear) signal and image classes, the situation can change by transforming the data using the CDT (for 1D signals) or the R-CDT (for 2D images). We start by analyzing the one dimensional generative model from definition \ref{def:gen_model_1d}. 

Employing the composition property of the CDT (see Section~\ref{prelim_b}) to the 1D generative model stated in equation~\eqref{eq:1dgenerative_model} we have that
\begin{equation}
    \widehat{s}_j^{(k)} = g_j^{-1}\circ \widehat{\varphi}^{(k)} 
\end{equation} 
and thus
\begin{eqnarray}
\widehat{\mathbb{S}}^{(k)}&=&\{\widehat{s}_j^{(k)}|\widehat{s}_j^{(k)}=g_j^{-1}\circ \widehat{\varphi}^{(k)}, \forall g_j\in\GO \}. \notag
\end{eqnarray}

Thus we have the following lemma:

\begin{lemma}
\label{lemma:convexity}
If $\GO \subset \Diff $ is a convex group, the set $\widehat \S^{(k)}$ is convex. 
\end{lemma}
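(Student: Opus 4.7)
The plan is a direct computation using the composition property (Property II-B.1) together with the two hypotheses on $\GO$: that it is a group, and that it is convex as a subset of $\Diff$. I would take two arbitrary elements $\widehat{s}_1^{(k)} = g_1^{-1}\circ\widehat{\varphi}^{(k)}$ and $\widehat{s}_2^{(k)} = g_2^{-1}\circ\widehat{\varphi}^{(k)}$ of $\widehat{\S}^{(k)}$, fix $\alpha\in[0,1]$, and try to exhibit some $g\in\GO$ such that
\begin{equation*}
\alpha\,\widehat{s}_1^{(k)} + (1-\alpha)\,\widehat{s}_2^{(k)} \;=\; g^{-1}\circ\widehat{\varphi}^{(k)}.
\end{equation*}

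The key observation is that convex combination distributes through pointwise composition on the left: evaluating at $x$, we have $\alpha\,g_1^{-1}(\widehat{\varphi}^{(k)}(x))+(1-\alpha)\,g_2^{-1}(\widehat{\varphi}^{(k)}(x)) = \bigl(\alpha g_1^{-1}+(1-\alpha)g_2^{-1}\bigr)(\widehat{\varphi}^{(k)}(x))$. So if I define $h := \alpha g_1^{-1} + (1-\alpha)g_2^{-1}$, the convex combination in $\widehat{\S}^{(k)}$ rewrites as $h\circ\widehat{\varphi}^{(k)}$. It then suffices to show that $h = g^{-1}$ for some $g\in\GO$, i.e.\ that $h^{-1}\in\GO$.

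This is where the two hypotheses combine. Because $\GO$ is a group, $g_1,g_2\in\GO$ forces $g_1^{-1},g_2^{-1}\in\GO$. Because $\GO$ is convex as a subset of $\Diff$, the combination $h = \alpha g_1^{-1}+(1-\alpha)g_2^{-1}$ lies in $\GO$ (and in particular is a strictly increasing diffeomorphism, so the inverse makes sense). Applying the group axiom once more yields $g := h^{-1}\in\GO$, and then $\alpha\widehat{s}_1^{(k)}+(1-\alpha)\widehat{s}_2^{(k)} = h\circ\widehat{\varphi}^{(k)} = g^{-1}\circ\widehat{\varphi}^{(k)} \in \widehat{\S}^{(k)}$, proving convexity.

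The only subtle point, and what I would flag as the main thing to watch, is the asymmetry between $g$ and $g^{-1}$: the CDT composition property represents transformed signals via the \emph{inverse} diffeomorphism, so a naive reading would require $\GO$ to be closed under convex combinations of inverses, which is a priori different from being closed under convex combinations of its own elements. The group hypothesis is exactly what collapses these two conditions into one, and it is worth stating this explicitly so the role of ``group'' versus ``convex'' is clear. Everything else (increasing-ness of $h$, differentiability, that $h^{-1}$ is well defined) follows from standard facts about monotone diffeomorphisms of $\R$ and requires no separate argument.
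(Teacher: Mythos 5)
Your proof is correct and follows essentially the same route as the paper's: both reduce convexity of $\widehat{\S}^{(k)}=\{g^{-1}\circ\widehat{\varphi}^{(k)}\mid g\in\GO\}$ to convexity of $\GO^{-1}$, using the group hypothesis to get $\GO^{-1}=\GO$ and the fact that convex combinations distribute through right-composition with the fixed function $\widehat{\varphi}^{(k)}$. You merely spell out more explicitly the step the paper compresses into ``Since $\GO$ is a convex group, $\GO^{-1}$ is convex,'' and your flagged subtlety about $\GO$ versus $\GO^{-1}$ is exactly the point that sentence is addressing.
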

\begin{proof}
Let $\varphi^{(k)}$ be a template signal defined as a PDF. For $g_j\in \GO$, let $s_j^{(k)}=g_j^\prime(\varphi^{(k)}\circ g_j)$. Then using the composition property of CDT, we have that $ \widehat s_j^{(k)}=g_j^{-1}\circ \widehat \varphi^{(k)}$. Hence $\widehat{\S}^{(k)}=\{g_j^{-1}\circ \widehat \varphi^{(k)} \mid g_j\in \GO\}$. Since $\GO$ is a convex group, $\GO^{-1}$ is convex, and it follows that $\widehat{\S}^{(k)}$ is convex.

\end{proof}

\begin{figure*}[!hbt]
    \centering
    \includegraphics[width=17cm]{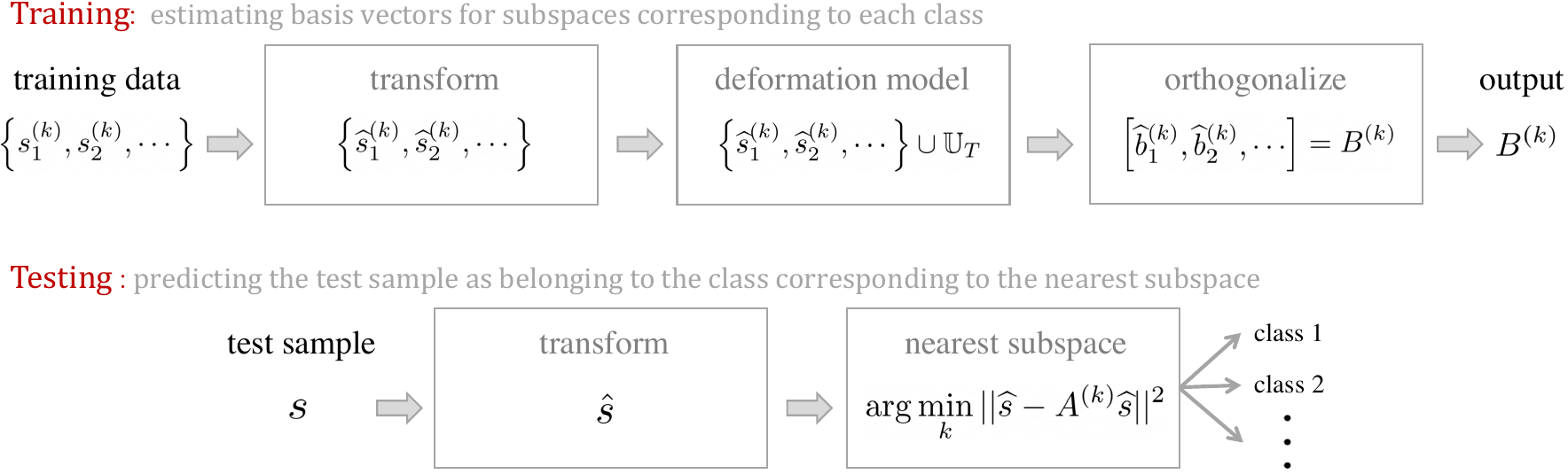}
    \caption{The training and testing process of the proposed classification model. Training: First, obtain the transform space representations of the given training samples of a particular class $(k)$. Then, enrich the space by adding the deformation spanning set $\U_T$ (see text for definition). Finally, orthogonalize to obtain the basis vectors which span the enriched space. Testing: First, obtain the transform space representation of a test sample $s$. Then, the class of $s$ is estimated to be the class corresponding to the subspace $\widehat{\S}^{(k)}_E$ which has the minimum distance $d^{2}(\widehat{s}, \widehat{\mathbb{S}}_E^{(k)})$ from $\widehat{s}$ (see text for definitions). Here, $A^{(k)}=B^{(k)}{B^{(k)}}^T$.}
    \label{train_test_fig}
    \vspace{-1em}
\end{figure*}




\begin{remark}
                                                                 Let $\mathbb{S}^{(k)}$ and $\mathbb{S}^{(p)}$ represent two generative models. If $\mathbb{S}^{(k)} \cap \mathbb{S}^{(p)}=\varnothing$, then $\widehat{\mathbb{S}}^{(k)} \cap  \widehat{\mathbb{S}}^{(p)}=\varnothing$.
    \label{lemma:nooverlap}
\end{remark}
    This follows from the fact that the CDT is a one to one map between the space of probability density functions and the space of 1D diffeomorphisms. As such the CDT operation is one to one, and therefore there exists no $\widehat{s}_j^{(k)} = \widehat{s}_i^{(p)}$.

Lemma \ref{lemma:convexity} above implies that if the set of spatial transformations formed by taking elements of $\GO$ and inverting them (denoted as $\GO^{-1}$) is convex, then the generative model will be convex in signal transform space. The situation is depicted in Figure~\ref{fig:signal_transform_spaces}. The top part shows a four class generative model that is nonlinear/non-convex. When examined in transform space, however, the data geometry simplifies in a way that signals can be added together to generate other signals in the same class -- the classes become convex in transform space.



The analysis above can be extended to the case of the 2D generative model (definition~\ref{def:gen_model_2d}) through the R-CDT. Employing the composition property of the R-CDT (see Section~\ref{prelim_d}) to the 2D generative model stated in equation~\eqref{eq:2dgenerative_model} we have that
\begin{eqnarray}
\widehat{\mathbb{S}}^{(k)}&=&\{\widehat{s}_j^{(k)}|\widehat{s}_j^{(k)}={\left(g_j^\theta\right)}^{-1}\circ \widehat{\varphi}^{(k)}, \forall g_j^\theta\in\G \}. 
\end{eqnarray}
Lemma~\ref{lemma:convexity} and Remark~\ref{lemma:nooverlap} hold true in the 2-dimensional R-CDT case as well. Thus, if $\mathcal{G}_R$ is a convex group, the R-CDT transform simplifies the data geometry in a way that image classes become convex in the R-CDT transform space. Figure~\ref{figres00}(a) depicts the situation. 

We use this information to propose a simple non-iterative training algorithm (described in more detail in Section~\ref{trn_sec}) by estimating a projection matrix that projects each (transform space) sample onto $\Vk$, for all classes $k=1,2, \cdots$, where $\Vk$ denotes the subspace generated by the convex set $\Sk$ as follows: 
\begin{align}
    \widehat{\V}^{(k)}=\mbox{span}\left(\Sk\right)=\left\{\sum_{j\in J}\alpha_j\widehat{s}_j^{(k)}\mid \alpha_j\in\R, J ~\textrm {is finite}\right\}. 
\end{align}
Figure~\ref{figres00}(b) provides a pictorial representation of $\Vk$. 
\begin{lemma}
\label{lemma:nonoverlap}
Let $\S^{(k)}, k= 1,2,...,$ be generative classes with a common confound set $\GO$ such that for any $f\notin \GO$, $f^{\prime}\varphi^{(k)}\circ f\notin \S^{(k)}$.
If $\GO$ is a convex group that also includes scaling, $\Sk\cap\widehat{\S}^{(p)}=\varnothing$, and
\begin{align}
    \label{eqlem3}\alpha ~id+(1-\alpha)h\notin\GO
\end{align} 
$\forall~ \textrm{increasing function}~h\notin\GO$ and $0<\alpha<1$ (here id denotes the identity function, $f(x)=x$), then $\Sk\cap \widehat{\V}^{(p)}=\varnothing$.
\end{lemma}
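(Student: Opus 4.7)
The plan is to argue by contradiction. Suppose $\widehat s \in \Sk \cap \widehat{\V}^{(p)}$. Membership in $\Sk$ gives $\widehat s = f_0 \circ \widehat{\varphi}^{(k)}$ for some $f_0 \in \GO$ (using the group structure of $\GO$ to rewrite the inverse $g^{-1}$ appearing in the description of $\Sk$ as another element of $\GO$), while membership in $\widehat{\V}^{(p)}$ gives $\widehat s = \bigl(\sum_j \alpha_j f_j\bigr) \circ \widehat{\varphi}^{(p)}$ for some finite collection of $f_j \in \GO$ and real $\alpha_j$. Writing $F = \sum_j \alpha_j f_j$, the goal is to show $F \in \GO$: once this is established, $F \circ \widehat{\varphi}^{(p)}$ will simultaneously lie in $\widehat{\S}^{(p)}$ and equal an element of $\Sk$, contradicting the hypothesis $\Sk \cap \widehat{\S}^{(p)} = \varnothing$.

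I would first promote the isolation condition \eqref{eqlem3} into a $g$-relative version: for every $g \in \GO$, every increasing $h \notin \GO$, and every $\alpha \in (0,1)$, $\alpha g + (1-\alpha) h \notin \GO$. This follows from the identity
\[
\alpha g + (1-\alpha) h \;=\; \bigl[\alpha \cdot \mathrm{id} + (1-\alpha)(h \circ g^{-1})\bigr] \circ g
\]
together with the group structure of $\GO$: if the left-hand side lay in $\GO$, post-composing with $g^{-1}$ would put the bracketed expression in $\GO$, forcing $h \circ g^{-1} \in \GO$ by the given isolation, and hence $h \in \GO$.

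To analyze $F$, observe that $F \circ \widehat{\varphi}^{(p)} = f_0 \circ \widehat{\varphi}^{(k)}$ is strictly increasing, so $F$ is strictly increasing on the range of $\widehat{\varphi}^{(p)}$. Partition the index set as $J_{\pm} = \{j : \pm \alpha_j > 0\}$ and set $P = \sum_{J_{+}} \alpha_j$, $N = \sum_{J_{-}} |\alpha_j|$, $f_{+} = \tfrac{1}{P} \sum_{J_{+}} \alpha_j f_j$, and $f_{-} = \tfrac{1}{N} \sum_{J_{-}} |\alpha_j| f_j$; each of $f_{\pm}$ (when defined) is a convex combination of elements of $\GO$ and hence lies in $\GO$, while $F = P f_{+} - N f_{-}$. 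Monotonicity of $F$ rules out $P = 0$. If $N = 0$, then $F = P f_{+}$ is a positive scaling of an element of $\GO$ and so is in $\GO$. If $N > 0$, rearranging yields
\[
\tfrac{P}{1+N}\, f_{+} \;=\; \tfrac{1}{1+N}\, F \;+\; \tfrac{N}{1+N}\, f_{-},
\]
whose left-hand side is a positive scaling of $f_{+} \in \GO$ (hence in $\GO$ by the scaling assumption and closure under composition) and whose right-hand side is a genuine convex combination of $F$ and $f_{-}$ with weights $\tfrac{1}{1+N}, \tfrac{N}{1+N} \in (0,1)$. Were $F$ increasing but outside $\GO$, the extended isolation property would place the right-hand side outside $\GO$, contradicting the left-hand side; therefore $F \in \GO$, which yields the desired contradiction.

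The step I expect to be most delicate is verifying that $F$ is globally increasing, not merely on the range of $\widehat{\varphi}^{(p)}$, so that the (globally stated) extended isolation property may be applied. The non-degeneracy hypothesis ``$f'\varphi^{(k)} \circ f \notin \S^{(k)}$ for $f \notin \GO$'' appears to be what pins down the relationship between the template range and the admissible diffeomorphisms and is presumably invoked here; reconciling it cleanly with the algebraic splitting $F = P f_{+} - N f_{-}$ is where the technical bookkeeping is concentrated. Once that point is settled, the convex-combination identity above and the extended isolation finish the argument.
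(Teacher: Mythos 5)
Your proof is correct and, once unwrapped, is essentially the paper's own argument recast at the level of the outer diffeomorphisms rather than the transform-space signals. Your ``extended isolation'' statement is precisely the engine behind the paper's Claim~(1): the paper writes $\widehat{s}_j^{(p)}=h\circ\widehat{s}_i^{(k)}$ with $h$ increasing and necessarily outside $\GO$ (else the two classes would intersect), and then pushes $\alpha\,id+(1-\alpha)h$ through assumption~\eqref{eqlem3} and the group structure of $\GO$ --- the same computation you perform with $h\circ g^{-1}$. Your rearrangement $\tfrac{P}{1+N}f_+=\tfrac{1}{1+N}F+\tfrac{N}{1+N}f_-$ is the paper's step of moving the negatively weighted terms across the equality (there with weights $\tfrac12,\tfrac12$), and your $P=0$ and $N=0$ degeneracies absorb the paper's separate ``all $\beta_j<0$'' and ``all $\beta_j>0$'' cases, the latter handled there by convexity plus scaling exactly as you handle $f_\pm$. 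What your packaging buys is a crisper statement of the crux --- a finite real-linear combination of elements of $\GO$ that is increasing must again lie in $\GO$ --- at the cost of the globalization issue you correctly flag: $F$ is only forced to be increasing on the range of $\widehat{\varphi}^{(p)}$, while \eqref{eqlem3} quantifies over increasing functions on $\R$. The paper's proof carries the same unstated burden (its $h$ in Claim~(1) is likewise only defined on the range of $\widehat{s}_i^{(k)}$), and both versions close cleanly when the templates are strictly positive on $\R$, the regime singled out by the paper's footnote on the non-degeneracy hypothesis; so this is a shared caveat rather than a defect of your route.
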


\begin{proof}
For a proof, see Appendix E in supplementary materials.
\end{proof}

Lemma~\ref{lemma:nonoverlap} above states that given certain assumptions, the convex space for a particular class does not overlap with the subspace corresponding to a different class. A corollary from Lemma \ref{lemma:nonoverlap} is that $\alpha \widehat{s}_i^{(k)}+(1-\alpha)\widehat{s}_j^{(p)}\notin \widehat{\S}^{(k)}\cup\widehat{\S}^{(p)}$ for all $\widehat{s}_i^{(k)}\in\widehat{\S}^{(k)}$ and $\widehat{s}_j^{(p)}\in\widehat{\S}^{(p)}$ with $0<\alpha<1$ (see Appendix E). Intuitively speaking, the generative classes generated by $\GO$ are "thin" in the transform space. Lemma~\ref{lemma:nonoverlap} holds true for the 2-dimensional R-CDT case as well.

There are a number examples of $\GO$ that satisfy the assumption in equation~\eqref{eqlem3}. For example, if $\GO$ is the set of translation functions, any strict convex combination (i.e., for $0<\alpha<1$) of a function other than translation and the identity function, is not a translation function either. One can also verify that there are other sets of functions that also satisfy the assumption, e.g., the set of increasing affine functions, the set of diffeomorphisms that have a common fixed point, etc.

It follows from Lemma~\ref{lemma:nonoverlap} that, if the test sample was generated according to the generative  model for one of the classes, then there will exist exactly one class $(k)$ for which $d^2( \widehat{s},\widehat{\mathbb{S}}^{(k)})= d^2( \widehat{s},\Vk) = 0$. It also follows, $d^2( \widehat{s},\widehat{\V}^{(p)})>0$ when $k\neq p$ \footnote{Rigorously speaking, if $\widehat{\V}^{(p)}$ is a closed subspace, then $d^2( \widehat{s},\widehat{\V}^{(p)})>0$ if and only if $\widehat s\notin \widehat{\V}^{(p)} $. In practice, $\widehat{\V}^{(p)}$ will be a finite dimensional space and hence the closedness condition is satisfied.}. Here $d^2(\cdot,\cdot)$ is the Euclidean distance between $\widehat{s}$ and the nearest point in $\widehat{\mathbb{S}}^{(k)}$ or $\Vk$. 

As far as a test procedure for determining the class of some unknown signal or image $s$, under the assumption that $\Sk \cap  \widehat{\V}^{(p)}=\varnothing$, it then suffices to measure the distance between $\widehat{s}$ and the nearest point in each subspace $\Vk$ corresponding to the generative model $\widehat{\S}^{(k)}$. Therefore, under the assumption that the testing sample at hand $s$ was generated according to one of the (unknown) classes as described in definition \ref{def:classification}, the class of the unknown sample can be decoded by solving
\begin{equation}
     \arg \min_k d^2(\widehat{s},\Vk).
    \label{eq:nearest_subspace}
\end{equation}

Finally, note that due to property~II-D.2 we also have that
\begin{equation}
    d^2(\widehat{s},\widehat{\mathbb{S}}^{(k)}) = \min_{g^\theta} SW^2_2\left(s,\mathscr{R}^{-1}\left(\left({g^\theta}\right)^\prime \widetilde{\varphi}^{(k)} \circ g^\theta\right)\right)\nonumber
\end{equation}
with $g^\theta \in \G$. In words, the R-CDT nearest subspace method proposed in equation~\eqref{eq:nearest_subspace} can be considered to be equivalent to a nearest (in the sense of the sliced-Wasserstein distance) subset method in image space, with the subset given by the generative model stated in definition \ref{def:gen_model_2d}.

\subsection{Training algorithm}
\label{trn_sec}
Using the principles and assumptions laid out above, the algorithm we propose estimates the subspace $\Vk$ corresponding to the transform space $\Sk$ given sample data $\{s_1^{(k)}, s_2^{(k)}, \cdots \}$. Naturally, the first step is to transform the training data to obtain $\{\widehat{s}_1^{(k)}, \widehat{s}_2^{(k)}, \cdots \}$. We then approximate $\Vk$ as follows: 
\begin{align}
    \Vk=\text{span}\left\{ \widehat{s}_1^{(k)}, \widehat{s}_2^{(k)}, \cdots  \right \}.\nonumber
\end{align}


Given the composition properties for the CDT and R-CDT (see properties II-B.1 and II-D.1), it is also possible to enrich $\Vk$ in such a way that it will automatically include the samples undergoing some specific deformations without explicitly training with those samples under said deformation. The spanning sets corresponding to two such deformations, image domain translation and isotropic scaling, are derived below: 
\begin{itemize}
	\item[i)] Translation: let $g(\mathbf{x})= \mathbf{x}-\mathbf{x_0}$ be the translation by $\mathbf{x}_0\in \R^2$ and $s_g(\mathbf{x})=|\det Jg|s\circ g=s(\mathbf{x}-\mathbf{x_0})$. Note that $Jg$ denotes the Jacobian matrix of $g$. Following \cite{kolouri2016radon} we have that $\widehat s_g (t,\theta) =\widehat{s}(t,\theta)+\mathbf{x}_0^T \xi_\theta$ where $\xi_\theta = [\cos(\theta),\sin(\theta)]^T$. We define the spanning set for translation in transform domain as $\U_T=\{u_1(t,\theta),u_2(t,\theta)\}$, where $u_1(t,\theta)=\cos\theta$ and $u_2(t,\theta)=\sin\theta$. 
	\item[ii)]Isotropic scaling: let $g(\mathbf{x})=\alpha\mathbf{x}$  and $s_g(\mathbf{x})=|Jg|s\circ g=\alpha^2s(\alpha\mathbf{x})$, which is the normalized dilatation of $s$ by $\alpha$ where $\alpha\in\R_{+}$. Then according to \cite{kolouri2016radon}, $\widehat s_g (t,\theta) = \widehat{s} (t,\theta)/\alpha$, i.e. a scalar multiplication. Therefore, an additional spanning set is not required here and thereby the spanning set for isotropic scaling becomes $\U_D=\varnothing$.
\end{itemize}
Note that the spanning sets are not limited to translation and isotropic scaling only. Other spanning sets might be defined as before for other deformations as well. However, deformation spanning sets other than translation and isotropic scaling are not used here and left for future exploration. 

In light of the above discussion, we define the enriched space $\Vk_E$ as follows:
\begin{align}
    \label{eq:transscal} \Vk_E=\mbox{span}\left(\left\{\widehat{s}_1^{(k)},\widehat{s}_2^{(k)},\cdots\right\}\cup\U_T\right)
\end{align}
where $\U_T=\{u_1(t,\theta),u_2(t,\theta)\}$, with $u_1(t,\theta)=\cos\theta$ and $u_2(t,\theta)=\sin\theta$. Figure~\ref{figres00}(b) depicts this situation.

We remark that although the R-CDT transform \eqref{eq:rcdt} is introduced in a continuous setting, numerical approximations for both the Radon and CDT transforms are available for discrete data, i.e., images in our applications \cite{kolouri2016radon}. Here we utilize the computational algorithm described in \cite{park2018cumulative} to estimate the CDT from observed, discrete data. Using this algorithm, and given an image $s$, $\widehat s$ is computed on a chosen grid  $[t_1,...,t_m]\times [\theta_1,...,\theta_n]$ and reshaped as a vector in $\R^{mn}$.\footnote{The same grid is chosen for all images. $m,n$ are positive integers .} Also the elements in $\U_T$ were computed on the above grid and reshaped to obtain a set of vectors in $\R^{mn}$. 

Finally, the proposed training algorithm includes the following steps: for each class $k$

\begin{enumerate}

\item Transform training samples to obtain $\left\{ \widehat{s}_1^{(k)}, \widehat{s}_2^{(k)}, \cdots  \right \}$ 

\item Orthogonalize $\left\{ \widehat{s}_1^{(k)}, \widehat{s}_2^{(k)}, \cdots  \right \}\cup\U_T$ to obtain the set of basis vectors $\left\{b_1^{(k)},b_2^{(k)},\cdots\right\}$, which spans the space $\Vk_E$ (see equation~\eqref{eq:transscal}). Use the output of orthogonalization procedure to define the matrix $B^{(k)}$ that contains the basis vectors in its columns as follows:
\begin{equation}
    B^{(k)} = \begin{bmatrix}b^{(k)}_1, b^{(k)}_2, \cdots \end{bmatrix}\notag
\end{equation}
\end{enumerate}
The training algorithm described above is summarized in Figure~\ref{train_test_fig}.

\subsection{Testing algorithm}
The testing procedure consists of applying the R-CDT transform followed by a nearest subspace search in  R-CDT space (see Figure~\ref{figres00}(c)). Let us consider a testing image $s$ whose class is to be predicted by the classification model described above. As a first step, we apply R-CDT on $s$ to obtain the transform space representation $\widehat{s}$. We then estimate the distance between $\widehat{s}$ and the subspace model for each class by $d^{2}(\widehat{s}, \Vk_E)\sim \| \widehat{s} - B^{(k)}{B^{(k)}}^T\widehat{s} \|^2$. Note that $B^{(k)}{B^{(k)}}^T$ is an orthogonal projection matrix onto the space generated by the span of the columns of $B^{(k)}$ (which form an orthogonal basis). To obtain this distance, we must first obtain the projection of $\widehat{s}$ onto the nearest point in the subspace $\Vk_E$, which can be easily computed by utilizing the orthogonal basis $\left\{b^{(k)}_1, b^{(k)}_2, \cdots \right\}$ obtained in the training algorithm. Although the pseudo-inverse formula could be used, it is advantageous in testing to utilize an orthogonal basis for the subspace instead. The class of $\widehat{s}$ is then estimated to be
\begin{equation}
    \arg \min_k \| \widehat{s} - A^{(k)} \widehat{s} \|^2.\nonumber
\end{equation}
where, $A^{(k)}=B^{(k)}{B^{(k)}}^T$. Figure~\ref{train_test_fig} shows a system diagram outlining these steps.

\section{Computational Experiments}
\label{mnm}

\subsection{Experimental setup}
Our goal is to study the classification performance of the method outlined above with respect to state of the art techniques (deep CNN's), and in terms of metrics such as classification accuracy, computational complexity, and amount of training data needed. Specifically, for each dataset we study, we generated train-test splits of different sizes from the original training set, trained the models on these splits, and reported the performances on the original test set. For a train split of a particular size, its samples were randomly drawn (without replacement) from the original training set, and the experiments for this particular size were repeated 10 times. All algorithms saw the same train-test data samples for each split. Apart from predictive performances, we also measured different models' computational complexity, in terms of total number of floating point operations (FLOPs).

A particularly compelling property of the proposed approach is that the R-CDT subspace model can capture different sizes of deformations (e.g. small translations vs. large translations) without requiring that all such small and large deformations be present in the training set. In other words, our model generalizes to data distributions that were previously unobserved. This is a highly desirable property particularly for applications such as the optical communication under turbulence problem described below, where training data encompassing the full range of possible deformations are limited. This property will be explored in section \ref{result}. 


Given their excellent performance in many classification tasks, we utilized different kinds of neural network methods as a baseline for assessing the relative performance of the method outlined above. Specifically, we tested three neural network models: 1) a shallow CNN model consisting of two convolutional layers and two fully connected layers (based on PyTorch's official MNIST demonstration example), 2) the standard VGG11 model~\cite{simonyan2014very}, and 3) the standard Resnet18 model~\cite{he2016deep}. All these models were trained for 50 epochs, using the Adam~\cite{kingma2014adam} optimizer with learning rate of 0.0005. When the training set size was less than or equal to 8, a validation set was not used, and the test performance was measured using the model after the last epoch. When the training set had more than 8 samples we used 10\% of the training samples for validation, and reported the test performance based on the model that had the best validation performance. To make a fair comparison we did not use data augmentation in the training phase of the neural network models nor of the proposed method.

The proposed method was trained and tested using the methods explained in section~\ref{plm_soln}. The orthogonalization of $\Vk_E$ was performed using singular value decomposition (SVD). The matrix of basis vectors $B^{(k)}$ was constructed using the left singular vectors obtained by the SVD of $\Vk_E$. The number of the basis vectors was chosen in such a way that the sum of variances explained by all the selected basis vectors in the $k$-th class captures 99\% of the total variance explained by all the training samples in the $k$-th class. A 2D uniform probability density function was used as the reference image for R-CDT computation (see equation~\eqref{eq:rcdt}).

\vspace{-0.5em}

\subsection{Datasets}
\begin{table}[]
\caption{Datasets used in the experiment.}
\centering
\label{dataset_table1}
\begin{tabular}{lcccc}
\hline
              & Image size & \makecell{No. of \\classes} & \makecell{No. of \\training \\images} & \makecell{No. of \\test \\images} \\ \hline
Chinese printed character     & 64 $\times$ 64          & 1000             & 100000                             & 100000                         \\
MNIST         & 28 $\times$ 28          & 10             & 60000                             & 10000                         \\
Affine-MNIST  & 84 $\times$ 84          & 10             & 60000                             & 10000                         \\
Optical OAM   & 151 $\times$ 151          & 32             & 22400                             & 9600                         \\
Sign language & 128 $\times$ 128          & 3             & 3280                             & 1073                         \\
OASIS brain MRI       & 208 $\times$ 208          & 2             & 100                             & 100                         \\
CIFAR10       & 32 $\times$ 32          & 10             & 50000                             & 10000                         \\
\hline
\end{tabular}
\vspace{-1em}
\end{table}
To demonstrate the comparative performance of the proposed method, we identified seven datasets for image classification: Chinese printed characters, MNIST, Affine-MNIST, optical OAM, sign language, OASIS Brain MRI, and CIFAR10 image datasets. The Chinese printed character dataset with 1000 classes was created by adding random translations and scalings to the images of 1000 printed Chinese characters. The MNIST dataset contains images of ten classes of handwritten digits which was collected from \cite{lecun1998gradient}. The Affine-MNIST dataset was created by adding random translations and scalings to the images of the MNIST dataset. The optical orbital angular momentum (OAM) communication dataset was collected from \cite{park2018multiplexing}. The dataset contains images of 32 classes of multiplexed oribital angular momentum beam patterns for optical communication which were corrupted by atmospheric turbulence. The sign language dataset was collected from \cite{sign_lang} which contains images of hand gestures. Normalized HOGgles images \cite{vondrick2013hoggles} of first three classes of the original RGB hand gesture images were used. Finally, the OASIS brain MRI image dataset was collected from \cite{marcus2007open}. The 2D images from the middle slices of the the original 3D MRI data were used in this paper. Besides these six datasets, we also demonstrated the results on the natural images of the gray-scale CIFAR10 dataset \cite{krizhevsky2009learning}. The details of the seven datasets used are available in Table~\ref{dataset_table1}.



\section{Results}
\label{result}

\subsection{Test accuracy}
\begin{figure*}[htb]
    \centering
    \includegraphics[width=18cm]{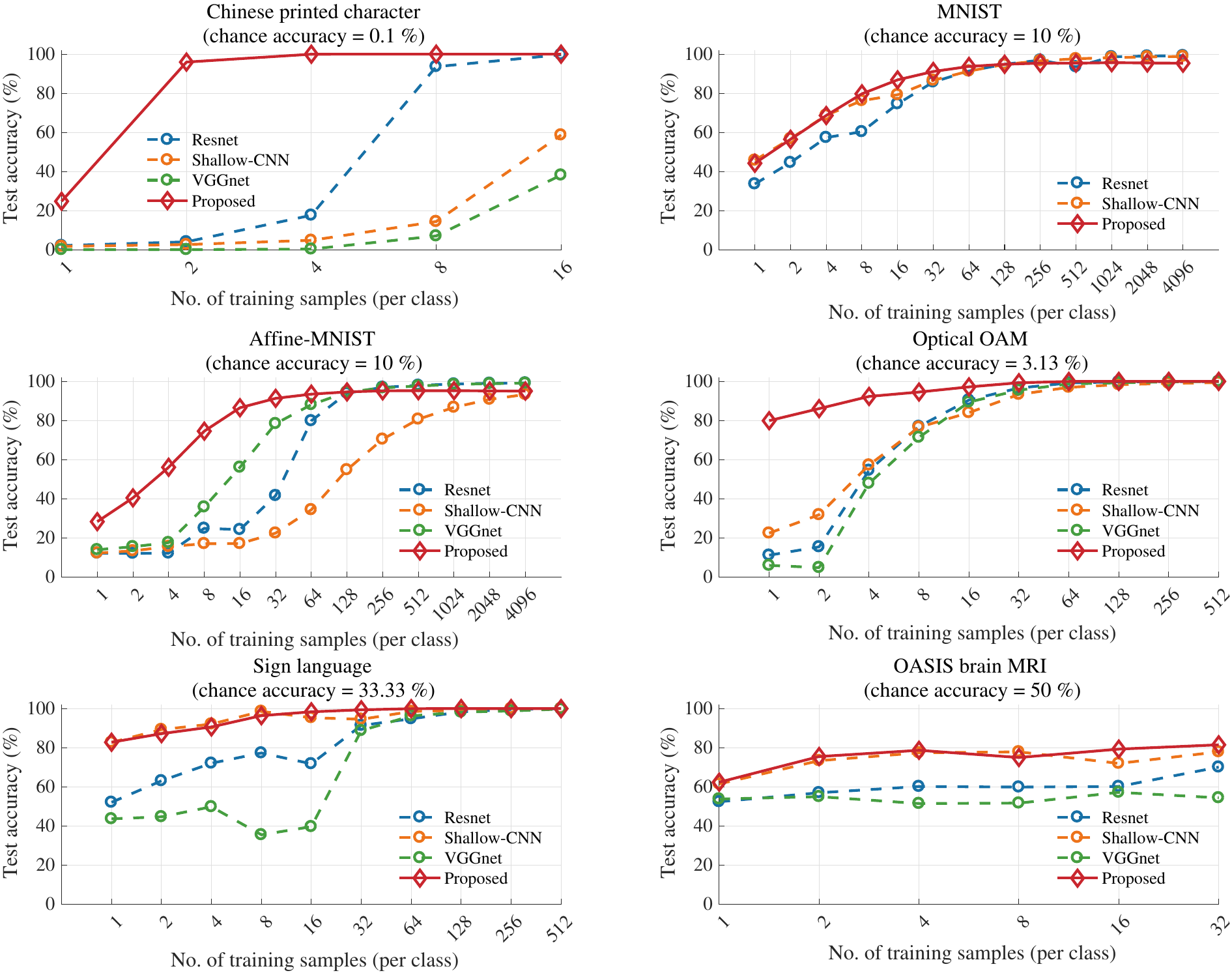}
    \caption{Percentage test accuracy of different methods as a function of the number of training images per class.}
    \label{figres01}
    \vspace{-1em}
\end{figure*}
\begin{figure*}[!htb]
    \centering
    \includegraphics[width=18cm]{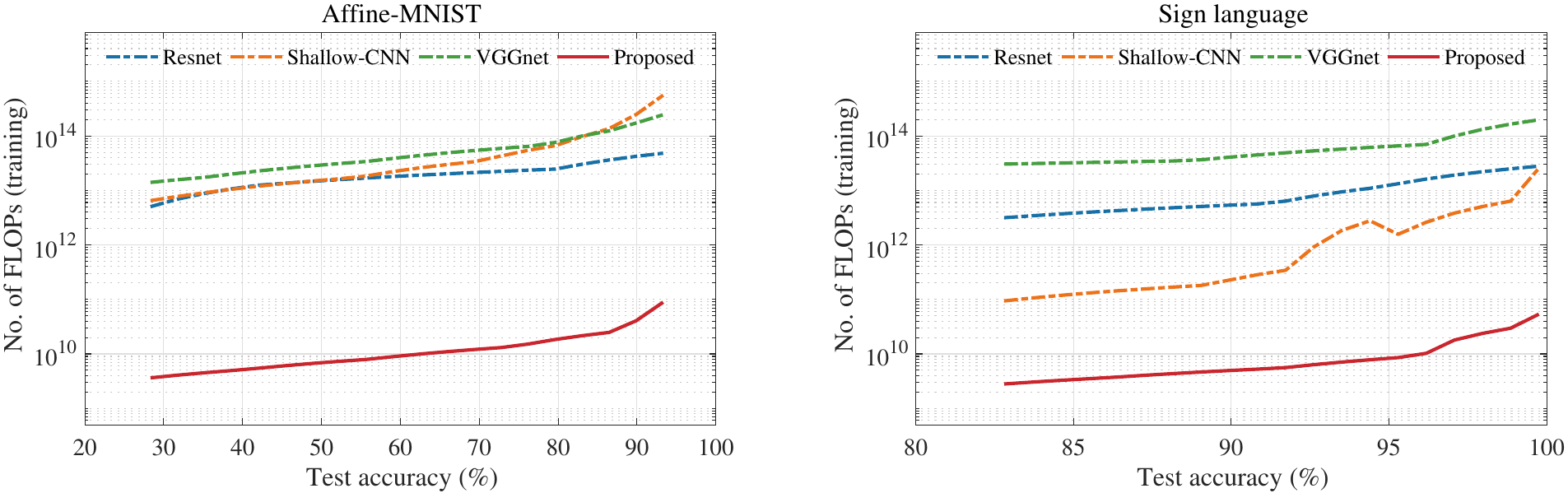}
    \caption{The total number of floating point operations (FLOPs) required by the methods to attain a particular test accuracy in the MNIST dataset (left) and the sign language dataset (right).}
    \label{figres05}
    \vspace{-1.0em}
\end{figure*}
\begin{figure*}[!htb]
    \centering
    \includegraphics[width=18cm]{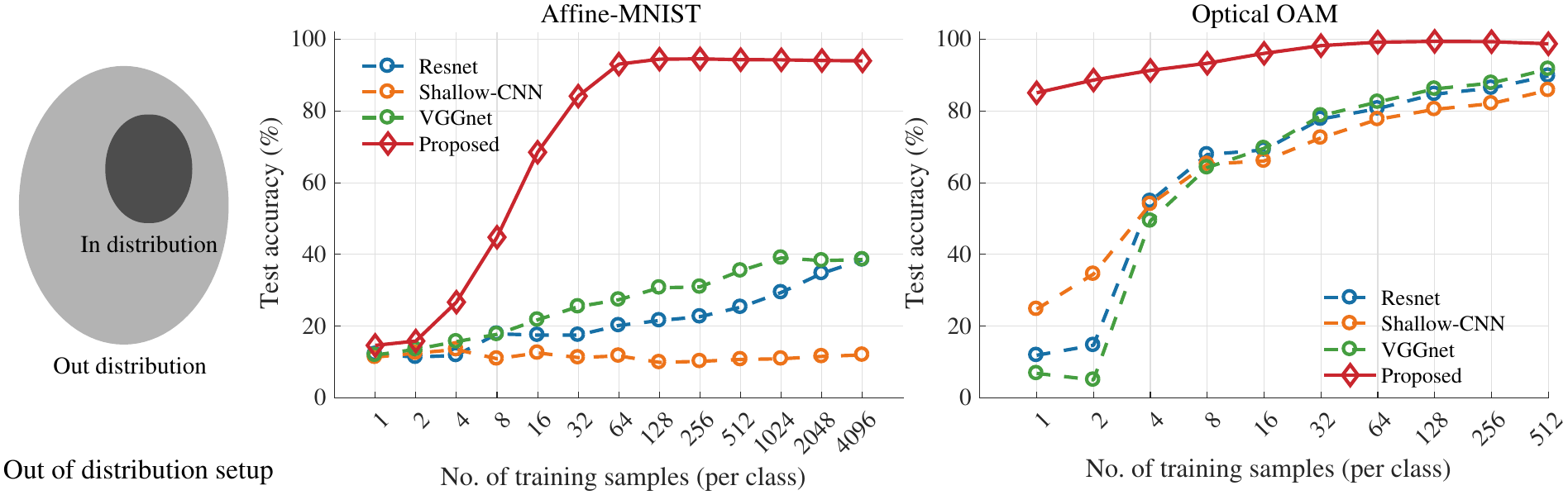}
    \caption{Computational experiments under the out-of-distribution setup. The out-of-distribution setup consists of disjoint training (`in distribution') and test (`out distribution') sets containing different sets of magnitudes of the confounding factors (see the left panel). Percentage test accuracy of different methods are measured as a function of the number of training images per class under the out-of-distribution setup (see the middle and the right panel).}
    \label{figres02}
    \vspace{-1em}
\end{figure*}
The average test accuracy values of the methods tested on Chinese printed character, MNIST, Affine-MNIST, optical OAM, sign language, and OASIS brain MRI image datasets for different number of training samples per class are shown in Figure~\ref{figres01}. Note that we did not use VGG11 in the MNIST dataset because the dimensions of MNIST images ($28\times28$, see Table~\ref{dataset_table1}) are too small for VGG11.

Overall, the proposed method outperforms other methods when the number of training images per class is low (see Figure~\ref{figres01}). For some datasets, the improvements are strikingly significant. For example, in the optical OAM dataset, and for learning from only one sample per class, our method provides an absolute improvement in test accuracy of $\sim 60\%$ over the CNN-based techniques. Also, the proposed method offers comparable performance to its deep learning counterparts when increasing the number of training samples. 

Furthermore, in most cases, the accuracy vs. training size curves have a smoother trend in the proposed method as compared with that of CNN-based learning. The standard deviation of test accuracy of the proposed method is also lower than the other methods in most of the cases (see Appendix F in supplementray materials). Moreover, the accuracy vs. training curves of the neural network architectures significantly vary as a function of the choice of the dataset. For example, Shallow-CNN outperforms Resnet in MNIST dataset while it underperforms Resnet in Affine-MNIST dataset in terms of test accuracy. Again, while outperforming VGG11 in the sign language dataset, the Resnet architecture underperforms VGG11 in the Affine-MNIST dataset. 

\subsection{Computational efficiency}
Figure~\ref{figres05} presents the number of floating point operations (FLOPs) required in the training phase of the classification models in order to achieve a particular test accuracy value. We used the Affine-MNIST and the sign language datasets in this experiment. 

The proposed method obtains test accuracy results similar to that of the CNN-based methods with $\sim50$ to $\sim10,000$ times savings in computational complexity, as measured by the number of FLOPs (see Figure~\ref{figres05}). The reduction of the computational complexity is generally larger when compared with a deep neural network, e.g., VGG11. The number of FLOPs required by VGG11 is $\sim3,000$ to $\sim10,000$ times higher than that required by the proposed method, whereas Shallow-CNN is $\sim50$ to $\sim6,000$ times more computationally expensive than the proposed method in terms of number of FLOPs. Note that, we have included the training FLOPs only in Figure~\ref{figres05}. We also calculated the number of FLOPs required in the testing phase. For all the methods, the number of test FLOPs per image is approximately $5$ orders of magnitude ($\sim10^5$) lower than the number of training FLOPs. The testing FLOPs of the proposed method depend on the number of training samples. Despite this fact, the number of test FLOPs required by the CNN-based methods in our experiments is $\sim5$ to $\sim100$ times more than the maximum number of test FLOPs required by the proposed method. These plots are not shown for brevity.
\vspace{-0.75em}
\subsection{Out-of-distribution testing}
In this experiment, we varied the magnitude of the confounding factors (e.g., translation) to generate a gap between training and testing distributions that allows us to test the out-of-distribution performance of the methods. Formally, let $\G \subset\Diff$ define the set of confounding factors. Let us consider two disjoint subsets of $\G$, denoted as $\GO_{in}$ and $\GO_{out}$, such that $\GO_{in}\subset\G$ and $\GO_{out}=\G\backslash\GO_{in}$. Using the generative model in equation~\eqref{eq:2dgenerative_model} the `in distribution' image subset $\mathbb{S}_{in}^{(k)}$ and the `out distribution' image subset $\mathbb{S}_{out}^{(k)}$ are defined using the two disjoint confound subsets $\GO_{in}$ and $\GO_{out}$ as follows:
\begin{align}
&\mathbb{S}_{in}^{(k)}=\left\{s_j^{(k)}|s_j^{(k)}=\RR^{-1}\left(\left({g_j^\theta}\right)^\prime\widetilde{\varphi}^{(k)}\circ g^\theta_j\right), \forall g^\theta_j\in\GO_{in} \right\}\nonumber\\
&\mathbb{S}_{out}^{(k)}=\left\{s_j^{(k)}|s_j^{(k)}=\RR^{-1}\left(\left({g_j^\theta}\right)^\prime\widetilde{\varphi}^{(k)}\circ g^\theta_j\right), \forall g^\theta_j\in\GO_{out} \right\}\nonumber
\end{align}
We defined the `in distribution' image subset $\mathbb{S}_{in}^{(k)}$ as the generative model for the training set and the `out distribution' image subset $\mathbb{S}_{out}^{(k)}$ as the generative model for the test set in this modified experimental setup (see the left panel of Figure~\ref{figres02}).

We measured the accuracy of the methods on the Affine-MNIST and the optical OAM datasets under the modified experimental setup. The Affine-MNIST dataset for the modified setup was generated by applying random translations and scalings to the original MNIST images in a controlled way so that the confound subsets $\GO_{in}$ and $\GO_{out}$ do not overlap. The `in distribution' image subset $\mathbb{S}^{(k)}_{in}$ consisted of images with translations by not more than $7$ pixels and scale factors varying between $0.9\sim 1.2$. On the other hand, images with translations by more than $7$ pixels and scale factors varying between $1.5\sim 2.0$ were used to generate the `out distribution' image subset $\mathbb{S}^{(k)}_{out}$. For the optical OAM dataset, the images at turbulence level 5 (low turbulence) \cite{park2018multiplexing} were included in the `in distribution' subset $\mathbb{S}^{(k)}_{in}$ and those at turbulence level 10 and 15 (medium and high turbulence) were included in the `out distribution' subset $\mathbb{S}^{(k)}_{out}$. The average test accuracy results for different training set sizes under the out-of-distribution setup are shown in Figure~\ref{figres02}. 

The proposed method outperforms the other methods by a greater margin than before under this modified experimental scenario (see Figure~\ref{figres02}). For the Affine-MNIST dataset, the test accuracy values of the proposed method are $\sim2$ to $\sim85\%$ higher than that of the CNN-based methods. For the optical OAM dataset, the accuracy values of the proposed method are $\sim7$ to $\sim85\%$ higher than those of the CNN-based methods (see Figure~\ref{figres02}). 

\begin{figure*}[htb]
    \centering
    \includegraphics[width=18cm]{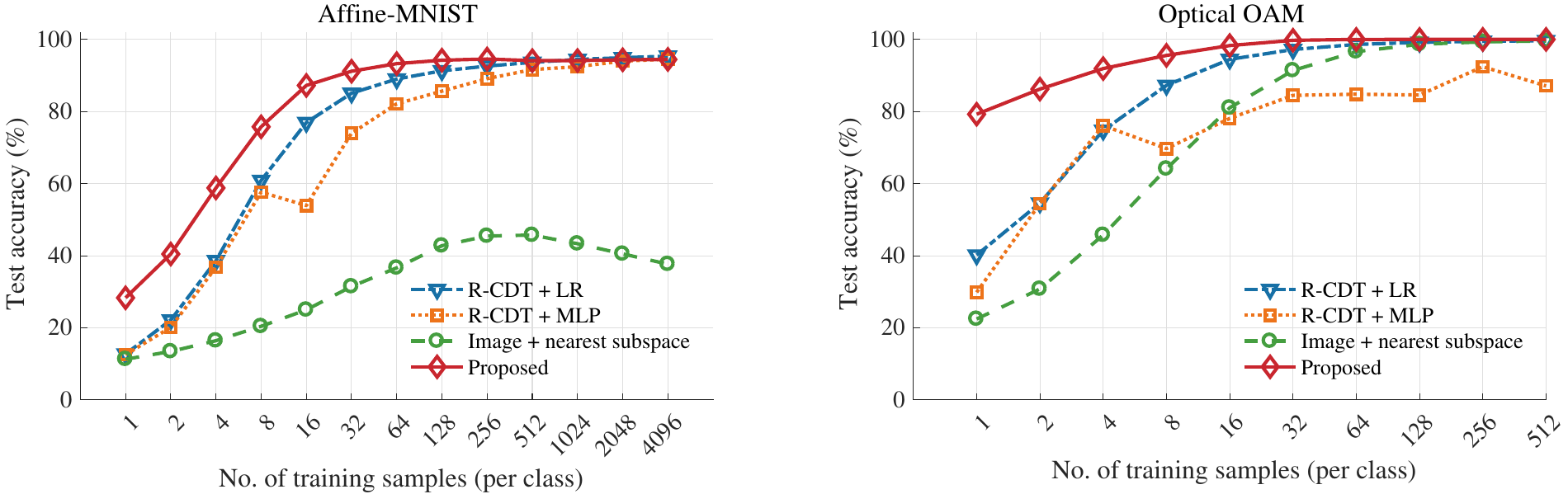}
    \caption{Comparison of the percentage test accuracy results obtained in the three ablation studies conducted (using the MLP-based and LR classifiers in R-CDT space and the nearest subspace classifier in image space) with that of the proposed method.}
    \label{figres06}
    \vspace{-0.5em}
\end{figure*}

As compared with the general experimental setup (Figure~\ref{figres01}), the test accuracy results of all the methods mostly reduce under this challenging modified experimental setup (Figure~\ref{figres02}). The average reduction of test accuracy of the proposed method under the modified setup is also significantly lower than that of the CNN-based methods. For the Affine-MNIST dataset, the average reduction of test accuracy for the proposed method is $\sim10\%$. Whereas, the reduction of test accuracy for the CNN-based methods are $\sim36\%-42\%$. Similarly, for the optical OAM dataset, the average reduction of accuracy are $\sim0\%$ and $\sim 9\%-12\%$ for the proposed method and the CNN-based methods, respectively.


\vspace{-0.5em}
\subsection{Ablation study}
To observe the relative impact of different components of our proposed method, we conducted three ablation studies using the Affine-MNIST and the optical OAM datasets. In the first two studies, we replaced the nearest subspace-based classifier used in our proposed method with a multilayer perceptron (MLP) \cite{gardner1998artificial} and a logistic regression (LR) classifier \cite{lr2020}, respectively, and measured the test accuracy of these modified models. In the third study, we replaced the R-CDT transform representations with the raw images. We measured the test accuracy of the nearest subspace classifier used with the raw image data. The percentage test accuracy results obtained in these modified experiments are illustrated in Figure~\ref{figres06} along with the results of the proposed method for comparison. The proposed method outperforms all these modified models in terms of test accuracy (see Figure~\ref{figres06}).
\begin{figure}[hbt]
    \centering
    \includegraphics[width=8.5cm]{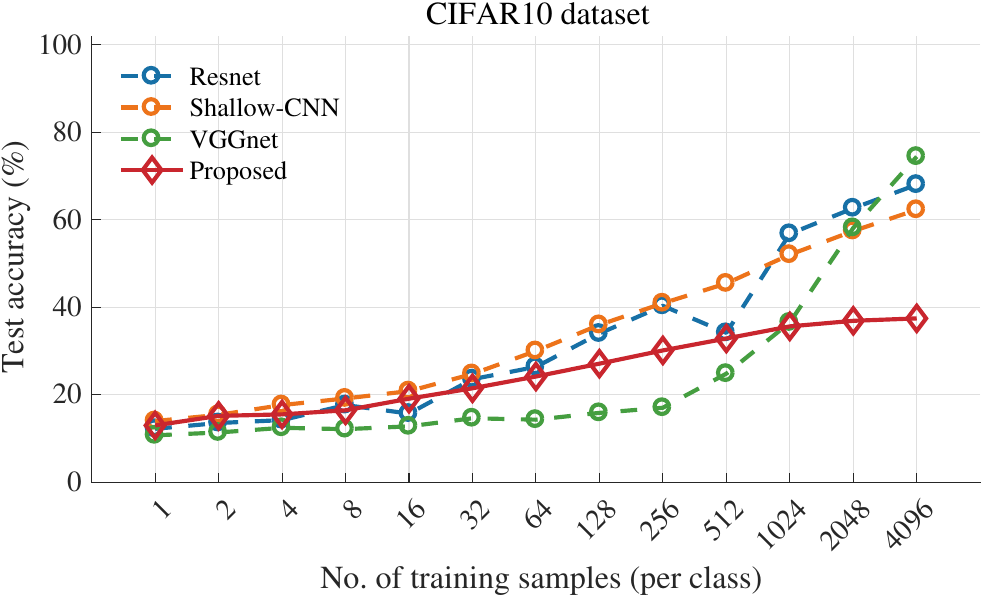}
    \caption{Percentage test accuracy results in the CIFAR10 dataset. The natural images in the CIFAR10 dataset might not conform to the underlying generative model, and therefore, the proposed method doesnot perform well in the CIFAR10 dataset.}
    \label{figres04}
    \vspace{-1em}
\end{figure}
\vspace{-1em}
\subsection{An example where the proposed method fails}
There are examples of image classification problems (e.g. natural image dataset) where the proposed method does not perform well. One such example of this kind of dataset is CIFAR10 dataset. To demonstrate this point, we measured the test accuracies of different methods on the gray-scale CIFAR10 dataset (see Figure~\ref{figres04}). It can be seen that, the highest accuracy of the proposed method is lower than the CNN-based methods. All of the CNN-based methods used outperform the proposed method in the gray-scale CIFAR10 dataset in terms of maximum test accuracy.

\section{Discussion}
\label{discuss}

\subsection*{Test accuracy}
Results shown with 6 example datasets suggest the proposed method obtains competitive accuracy figures as compared with state of the art techniques such as CNNs as long as the data at hand conform to the generative model in equation~\eqref{eq:2dgenerative_model}. Moreover, in these examples, the nearest R-CDT subspace method was shown to be more data efficient: generally speaking, it can achieve higher accuracy with fewer training samples. 


\subsection*{Computational efficiency}
The proposed method obtains accuracy figures similar to that of the CNN-based methods with $\sim50$ to $\sim10,000$ times reduction of the computational complexity. Such a drastic reduction of computation can be achieved due to the simplicity and non-iterative nature of the proposed solution. As opposed to the neural networks where GPU implementations are imperative, the proposed method can efficiently be implemented in a CPU and greatly simplify the process of obtaining an accurate classification model for the set of problems that are well modeled by our problem statement defined in definition~\ref{def:classification}. 




\subsection*{Out-of-distribution testing}
The accuracy results of the CNN-based methods drastically fall under the out-of-distribution setting whereas the proposed method maintains its test accuracy performance. Based on the above findings we infer that the proposed method can be suitable for both interpolation (predicting the classes of data samples within the known distribution) and extrapolation (predicting the classes of data samples outside the known distribution) when the data conforms to the generative model expressed in definition \ref{def:gen_model_2d}.

The out-of-distribution setting for image classification also bears practical significance. For example, consider the problem of classifying the OAM beam patterns for optical communications (see the optical OAM dataset in Figure~\ref{figres01}). As these optical patterns traverse air with often unknown air flow patterns, temperature, humidity, etc., exact knowledge of the turbulence level that generated a test image may not always be at hand. Therefore, it is practically infeasible to train the classification model with images at the same turbulence level as the test data. The out-of-distribution setup is more practical under such circumstances. 
\vspace{-1.0em}
\subsection*{Ablation study}
Based on the ablation study results, we conclude that the proposed method of using the nearest subspace classifier in R-CDT domain is more appropriate for the category of classification problems we are considering. Data classes in original image domain do not generally form a convex set and therefore and in these instances the subspace model is not appropriate in image domain. The subspace model is appropriate in R-CDT domain as the R-CDT transform provides a linear data geometry. Considering the subspace model in R-CDT space also enhances the generative nature of the proposed classification method by implicitly including the data points from the convex combination of the given training data points. Use of a discriminative model for classification (e.g., MLP, LR, etc.) with the R-CDT domain representations of images does not have that advantage.
\vspace{-1em}

\subsection*{When are $\GO^{-1}$ and $\G^{-1}$ convex}
Given the performance in terms of accuracy and complexity, the R-CDT subspace model presented above seems to be an appropriate model for many applications. However, that is not always the case, as the results with the CIFAR10 dataset show. It is thus natural to ask for what types of problems will the proposed method work well.

The definitions expressed in \ref{def:gen_model_1d} and \ref{def:gen_model_2d} define the generative model for the data classes used in our classification problem statement \ref{def:classification}. As part of the solution to the classification problem, it was proved in Lemma \ref{lemma:convexity} that so long as $\GO^{-1}$ or $\G^{-1}$ (the inverse of the transportation subset of functions) is convex, $\Sk$ is convex, and that is a precondition for the proposed classification algorithm summarized in Figure~\ref{train_test_fig} to solve the classification problem stated in \ref{def:classification}. A natural question to ask is when, or for what types of transportation functions is this condition met? Certain simple examples are easy to describe. For example, when $\GO$ or $\G$ denotes the set of translations in 1 or 2D, then $\GO^{-1}$ or $\G^{-1}$ can be shown to be convex. Furthermore, when $\GO$ or $\G$ refers to the set of scalings of a function, then $\GO^{-1}$ or $\G^{-1}$ can be shown to be convex. When $\GO$ or $\G$ contains a set of fixed points, i.e. when $g(t_i) = t_i$, then $\GO^{-1}$ or $\G^{-1}$ can be shown to be convex. Our hypothesis is that the 6 problems we tested the method on conform to the generative model specifications at least in part, given that classification accuracies significantly higher than chance are obtained with the method. A careful mathematical analysis of these and related questions is the subject of present and future work.

\subsection*{Limitation: An example where the proposed method fails}
The fundamental assumption of the proposed method is that the data at hand conform to an underlying generative model (equation~\ref{eq:2dgenerative_model}). If the dataset does not conform to the generative model, the proposed method may not perform well. The CIFAR10 dataset (Figure~\ref{figres04}) is an example where the data classes might not follow the generative model. The proposed method underperforms the CNN-based methods in the case of the CIFAR10 dataset.

\section{Conclusions}
\label{conclude}
We introduced a new algorithm for supervised image classification. The algorithm builds on prior work related to the Radon Cumulative Distribution Transform (R-CDT) \cite{kolouri2016radon} and classifies a given image by measuring the the distance between the R-CDT of that image and the linear subspaces $\Vk$, $k=1,2,\cdots, N_{\mbox{classes}}$ estimated from the linear combination of the transformed input training data. As distances between two images in R-CDT space equate to the sliced Wasserstein distances between the inverse R-CDT of the same points, the classification method can be interpreted as a `nearest' Sliced Wasserstein distance method between the input image and other images in the generative model $\S^{(k)}$ for each class $k$. 


The model was demonstrated to solve a variety of real-world classification problems with accuracy figures similar to state of the art neural networks including a shallow method, VGG11~\cite{simonyan2014very}, and a Resnet18~\cite{he2016deep}. The proposed model was also shown to outperform the neural networks by a large margin in some specific practical scenarios, e.g., training with very few training samples and testing with `out of distribution' test sets. The method is also extremely simple to implement, non-iterative, and it does not require tuning of hyperparameters. Finally, as far as training is concerned the method was also demonstrated to be significantly less demanding in terms of floating point operations relative to different neural network methods.

We note, however, that the method above is best suited for problems that are well modeled by the generative model definition provided in Section \ref{plm_state}. The definition is naturally tailored towards modeling images which are segmented (foreground extracted). Examples shown here include classifying written Chinese characters, MNIST numerical digits, optical communication patterns, sign language hand shapes, and brain MRIs.  We also note that the model does not account for many other variations present in many important image classification problems. Specifically, the proposed model does not account for occlusions, introduction of other objects in the scene, or variations which cannot be modeled as a mass (intensity) preserving transformation on a set of templates. Computational examples using the CIFAR10 dataset demonstrate that indeed the proposed model lags far behind, in terms of classification accuracy, the standard deep learning classification methods to which it was compared.

Finally, we note that numerous adaptations of the method are possible. We note that the linear subspace method (in R-CDT space) described above can be modified to utilize other assumptions regarding the set that best models each class. While certain classes or problems may benefit from a simple linear subspace method as described above, where all linear combinations are allowed, other classes may be composed by the union of non-orthogonal subspaces. Furthermore, note that, we focus on supervised learning in this paper. The method can however be adapted to be used in the context of unsupervised learning also (subspace clustering, for example). The exploration of this and other modifications and extensions of the method are left for future work.
\vspace{-1.0em}
\section*{Acknowledgments}
This work was supported in part by NIH grants GM130825, GM090033.
\vspace{-1.0em}


\clearpage
\setcounter{page}{1} 
\appendices
\setcounter{equation}{0}
\section{Proof of property II-B.1.}
\subsection*{The composition property of the CDT:} Let $s(x)$ denote a normalized signal and let $\widehat{s}(x)$ be the CDT of $s(x)$. The CDT of $s_g=g's\circ g$ is given by 
\begin{align}
    \widehat{s}_g=g^{-1}\circ\widehat{s}\nonumber
\end{align}

\begin{proof}
Let $r$ denote a reference signal. If $\widehat{s}$ and $\widehat{s}_g$ denote the CDTs of $s$ and $s_g$, respectively, with respect to the reference $r$, we have that
\begin{align}
    \int_{-\infty}^{\widehat{s}(x)}s(u)du=\int_{-\infty}^{\widehat{s}_g(x)}s_g(u)du=\int_{-\infty}^{x}r(u)du\nonumber
\end{align}
By substituting $s_g=g's\circ g$ we have
\begin{align}
    \label{appeq2}\int_{-\infty}^{\widehat{s}(x)}s(u)du=\int_{-\infty}^{\widehat{s}_g(x)}g^\prime(u)s(g(u))du\numberwithin{equation}{section}
\end{align}
By the change of variables theorem, we can replace $g(u)=v$, $g^\prime(u)du=dv$ in equation~\eqref{appeq2}:
\begin{align}
    \label{appeq3}\int_{-\infty}^{\widehat{s}(x)}s(u)du=\int_{-\infty}^{g\left(\widehat{s}_g(x)\right)}s(v)dv\numberwithin{equation}{section}
\end{align}
From equation~\eqref{appeq3}, we have that
\begin{align}
    g\left(\widehat{s}_g(x)\right)=\widehat{s}(x)\implies\widehat{s}_g(x)=g^{-1}\left(\widehat{s}(x)\right)~\mbox{or,}~\widehat{s}_g=g^{-1}\circ\widehat{s} \nonumber
\end{align}
\end{proof}

\section{Proof of property II-D.1.}
\subsection*{The composition property of the R-CDT:} Let $s(\mathbf{x})$ denote a normalized image and let $\widetilde{s}(t,\theta)$ and $\widehat{s}(t,\theta)$ are the Radon transform and the R-CDT transform of $s(x)$, respectively. The R-CDT of $s_{g^\theta}=\mathscr{R}^{-1}\left(\left({g^\theta}\right)^\prime\widetilde{s}\circ {g^\theta}\right)$ is given by 
\begin{align}
    \widehat{s}_{g^\theta}=\left(g^\theta\right)^{-1}\circ\widehat{s}\nonumber
\end{align}

\begin{proof}
Let $r$ denote a reference image. Let $\widetilde{s}$ and $\widetilde{s}_{g^\theta}$ denote the Radon transforms of $s$ and $s_{g^\theta}$, respectively, and let $\widehat{s}$ and $\widehat{s}_{g^\theta}$ denote the CDTs of $s$ and $s_{g^\theta}$, respectively, with respect to the reference $r$. Then $\forall\theta\in[0,\pi]$, we have that
\begin{align}
    \int_{-\infty}^{\widehat{s}(t,\theta)}\widetilde{s}(u,\theta)du=\int_{-\infty}^{\widehat{s}_{g^\theta}(t,\theta)}\widetilde{s}_{g^\theta}(u,\theta)du=\int_{-\infty}^{t}\widetilde{r}(u,\theta)du\nonumber
\end{align}
If we substitute $s_{g^\theta}=\mathscr{R}^{-1}\left(\left({g^\theta}\right)^\prime\widetilde{s}\circ {g^\theta}\right)$ or, $\widetilde{s}_{g^\theta}=\left({g^\theta}\right)^\prime\widetilde{s}\circ {g^\theta}$. Then $\forall\theta\in[0,\pi]$, we have
\begin{align}
    \label{appbeq2}\int_{-\infty}^{\widehat{s}(t,\theta)}\widetilde{s}(u,\theta)du=\int_{-\infty}^{\widehat{s}_{g^\theta}(t,\theta)}\left(g^\theta\right)^\prime(u)\widetilde{s}\left(g^\theta(u),\theta\right)du
\end{align}
By the change of variables theorem, we can replace $g^\theta(u)=v$, $\left(g^\theta\right)^\prime(u)du=dv$ in equation~\eqref{appbeq2}:
\begin{align}
    \label{appbeq3}\int_{-\infty}^{\widehat{s}(t,\theta)}\widetilde{s}(u,\theta)du=\int_{-\infty}^{g^\theta\left(\widehat{s}_{g^\theta}(t,\theta)\right)}\widetilde{s}(v,\theta)dv,~\forall\theta\in[0,\pi]
\end{align}
From equation~\eqref{appbeq3}, we have that
\begin{align}
    &g^\theta\left(\widehat{s}_{g^\theta}(t,\theta)\right)=\widehat{s}(t,\theta)\nonumber\\
    \implies&\widehat{s}_{g^\theta}(t,\theta)=\left(g^\theta\right)^{-1}\left(\widehat{s}(t,\theta)\right)~\mbox{or,}~\widehat{s}_{g^\theta}=\left(g^\theta\right)^{-1}\circ\widehat{s} \nonumber
\end{align}
\end{proof}

\section{Proof of Property II-B.2}\label{appendix: cdtembedding}
Recall that given two signals $s$ and $r$, the Wasserstein metric $W_2(\cdot,\cdot)$ between them is defined in the following way:
\begin{equation}
	W_2^2(s,r) = \int_{\Omega_r} (\widehat s(x)-x)^2 r(x)dx,
\end{equation}
where $\widehat s$ is the CDT of $s$ with respect to $r$.
\begin{proof}
	Recall that an isometric embedding between two metric spaces is an injective mapping that preserve distances. Define the embedding by the correspondence $s \mapsto \widehat s$, it is left to show that 
\begin{equation*}
	 W_2^2(s_1,s_2)=\left|\left|\left(\widehat{s}_1-\widehat{s}_2\right)\sqrt{r}\right|\right|_{L^2(\Omega_r)}^2,
\end{equation*}
for all signals $s_1,s_2$. Let $f(y)$ be the CDT of $s_2$ with respect to $s_1$, then
\begin{equation*}
	W_2^2(s_2,s_1) = \int_{\Omega_{s_1}} (f(y)-y)^2 s_1(y)dy.
\end{equation*}
By the definition of CDT,  $s_1 = f^{\prime} s_2\circ f$ and  $r= \widehat s_1^{\prime}s_1\circ \widehat s_1 $. Then by the composition property, $\widehat s_1 = f^{-1}\circ \widehat s_2$. Here again $\widehat s_1, \widehat s_2$ are CDT with respect to a fixed reference $r$.
Let $y = \widehat s_1 (x)$.  Using the change of variables formula,
\begin{align*}
	 W_2^2(s_1,s_2)& = \int_{\Omega_r} (f(\widehat s_1(x)- \widehat s_1(x)) s_1(\widehat s_1(x)) \widehat s_1^{\prime}(x)dx\\
	&=\int_{\Omega_r} (\widehat s_2(x)- \widehat s_1(x))^2r(x)dx\\
	& = ||(\widehat s_2- \widehat s_1)\sqrt{r}||^2_{L^2(\Omega_r)}.
\end{align*}
\end{proof}

\section{Proof of Property II-D.2}
Recall that given two images $s,r$, using the correspondence in equation~\eqref{eq:rcdt} the Sliced Wasserstein metric $SW_2(\cdot,\cdot)$ is defined as follows:
\begin{equation}\label{eq: swmetric}
    SW_2^2(s,r) = \int_{\Omega_{\tilde r}} (\widehat s(t,\theta)-t)^2 \tilde r(t,\theta) dt d\theta.
\end{equation}
It can be shown that the above metric is well-defined\cite{kolouri2016radon}, and in particular
\begin{equation}\label{eq: sw}
 	SW_2^2(s_1,s_2)= \int_{\Omega_{\tilde r}} (\widehat s_1(t,\theta)-\widehat s_2(t,\theta))^2 \widetilde r(t,\theta) dt d\theta,
\end{equation}
for all images $s_1,s_2$, the proof of which is essentially the same as in the CDT case in Appendix \ref{appendix: cdtembedding}.

\begin{proof}
Recall that an isometric embedding between two metric spaces is an injective mapping that preserve distances. Define the embedding by $s(\bf{x}) \mapsto \widehat s(t,\theta)$ and the conclusion follows immediately from \eqref{eq: sw}. 
\end{proof}

\section{Proof of Lemma~\ref{lemma:nonoverlap}.}
Let $\S^{(k)}, k= 1,2,...,$ be the generative classes with a common confound set $\GO$ such that any $f\notin \GO$, $f^{\prime}\varphi^{(k)}\circ f\notin \S^{(k)}$.\footnote{This condition is automatically satisfied if $\varphi^{(k)}>0$ on $\R$ but may not hold in general if $\varphi^{(k)}$ is supported on a finite interval.}
\subsection*{Proposition: $\widehat{\S}^{(k)}\cap\widehat{\V}^{(p)}=\varnothing,~~\forall~k\neq p$.}
\subsection*{Assumptions:}
\begin{enumerate}
    \item $\S^{(k)}\cap\S^{(p)}=\varnothing$.
    \item $\left\{f(x)=ax|a>0\right\}\subseteq\GO$.
    \item $\GO$ is a convex group.
    \item $\forall~ \textrm{increasing function}~h\notin\GO$ and $0<\alpha<1$, $~\alpha ~id+(1-\alpha)h\notin\GO$ ($id$ denotes the identity function, $f(x)=x$).
\end{enumerate}

\begin{proof}
Before we prove the main claim, let us start by stating and proving the following claim:\\\\
{\emph{Claim (1)}}: $\forall~\widehat{s}_i^{(k)}\in\widehat{\S}^{(k)}$ and $\widehat{s}_j^{(p)}\in\widehat{\S}^{(p)}$ and $0<\alpha<1$, 
\begin{align}
\alpha \widehat{s}_i^{(k)}+(1-\alpha)\widehat{s}_j^{(p)}\notin \widehat{\S}^{(k)}\cup\widehat{\S}^{(p)}.\nonumber    
\end{align}\\
{\emph{Proof of Claim (1)}}: Let us prove by contradiction and assume that the claim is not true. Then, given $\alpha\in (0,1)$
\begin{align}
    &\alpha \widehat{s}_i^{(k)}+(1-\alpha)\widehat{s}_j^{(p)}\in \widehat{\S}^{(k)}.\nonumber\\
    \label{esw}\implies&\alpha\widehat{s}_i^{(k)}+(1-\alpha)\widehat{s}_j^{(p)}=g^{-1}\circ\widehat{\varphi}^{(k)},
\end{align}
for some $g\in \GO$.

Then, $\exists~h\notin\GO$, where $h\circ\widehat{s}_i^{(k)}=\widehat{s}_j^{(p)}$. Using this fact in equation~\eqref{esw} we have that,
\begin{align}
    &\alpha \widehat{s}_i^{(k)}+(1-\alpha)h\circ\widehat{s}_i^{(k)}=g^{-1}\circ\widehat{\varphi}^{(k)}\nonumber\\
    \implies& \left(\alpha~id+(1-\alpha)h\right)\circ g_i^{-1}\circ\widehat{\varphi}^{(k)}=g^{-1}\circ\widehat{\varphi}^{(k)};~g_i\in\GO\nonumber\\
    \label{esdws}\implies& f^{-1}\circ\widehat{\varphi}^{(k)}=g^{-1}\circ\widehat{\varphi}^{(k)}
\end{align}
where $f^{-1}=\left(\alpha~id+(1-\alpha)h\right)\circ g_i^{-1}$. Note that by assumption (4), $\alpha~id+(1-\alpha)h\notin\GO$. Since $g_i\in \GO$ and $\GO$ is a group, it follows that $f^{-1}\notin \GO$ and hence $f\notin \GO$. By the  assumption that  for any $f\notin \GO$, $f^{\prime}\varphi^{(k)}\circ f\notin \S^{(k)}$ (or equivalently $f^{-1}\circ \widehat \varphi^{(k)}\notin \widehat \S^{(k)}$), it follows that the LHS of \eqref{esdws} does not belong to $\S^{(k)}$, which is a contradiction since the RHS of \eqref{esdws} belongs to $\S^{(k)}$.
Therefore, 
\begin{align}
    &\alpha \widehat{s}_i^{(k)}+(1-\alpha)\widehat{s}_j^{(p)}\notin \widehat{\S}^{(k)}.\nonumber
\end{align}
Similarly, we can show that
\begin{align}
    &\alpha \widehat{s}_i^{(k)}+(1-\alpha)\widehat{s}_j^{(p)}\notin \widehat{\S}^{(p)}.\nonumber
\end{align}
In other words, 
\begin{align}
    &\alpha \widehat{s}_i^{(k)}+(1-\alpha)\widehat{s}_j^{(p)}\notin \widehat{\S}^{(k)}\cup\widehat{\S}^{(p)}.\nonumber
\end{align}
Therefore, Claim (1) is true.\\\\
{\emph{Main claim}}: 
\begin{align}
\widehat{\S}^{(k)}\cap\widehat{\V}^{(p)}=\varnothing,~~\forall~k\neq p\nonumber
\end{align}\\
{\emph{Proof of the main claim}}: Let us prove by contradiction and assume that the main claim is not true. Then, $\exists~\beta_j\in\mathbb{R}$ for some $g\in\GO$ such that 
\begin{align}
\label{dded}\sum_{j\in J}\beta_j\widehat{s}_j^{(p)}=g^{-1}\circ\widehat{\varphi}^{(k)}    
\end{align}
Let us consider the case when  $\beta_j>0$ for all $j\in J$.
Note that  the LHS of equation~\eqref{dded} is a member of $\widehat{\S}^{(p)}$. To see this, we note that by assumption (2) and Lemma \ref{lemma:convexity}, any convex combination of elements in $\widehat{\S}^{(p)}$ lies in $\widehat \S^{(p)}$, i.e., $ \sum\limits_{j\in J} \frac{\beta_j}{\sum\limits_{j\in J} \beta_j}\widehat{s}_j^{(p)}\in \widehat{\S}^{(p)}$. By assumption (3) and the composition property of the CDT (see Section~\ref{prelim_b}), we have that $\alpha^{-1}\circ \widehat s^{(p)}\in \widehat{\S}^{(p)}$ for any $\alpha>0$ and $\widehat s^{(p)}\in \widehat\S^{(p)}$. Letting $\alpha = (\sum\limits_{j\in J} \beta_j)^{-1}$ and $\widehat s^{(p)} = \frac{1}{\sum\limits_{j\in J} \beta_j} \sum\limits_{j\in J} \beta_j\widehat{s}_j^{(p)}$, we have that $\sum\limits_{j\in J} \beta_j\widehat{s}_j^{(p)}\in \widehat{\S}^{(p)}$.    Since the RHS of equation~\eqref{dded} lies in $\widehat{\S}^{(k)}$, it follows that  equation~\eqref{dded} cannot hold when  $\beta_j>0$ for all $j\in J$ as $\widehat{\S}^{(p)}\cap \widehat{\S}^{(k)} = \varnothing$ (by assumption (1) and Remark~\ref{lemma:nooverlap}). On the other hand, equation~\eqref{dded} cannot hold when $\beta_j<0$ for all $j\in J$ since the LHS of \eqref{dded}  would be a strictly decreasing function while the RHS is a strictly increasing function. Now, let us define the following:
\begin{align}
    J_+=\left\{j\in J|\beta_j>0\right\};~~J_-=\left\{j\in J|\beta_j<0\right\}\nonumber
\end{align}
Equation~\eqref{dded} then can be written as
\begin{align}
    &\frac{1}{2}\sum_{j\in J_+}\beta_j\widehat{s}_j^{(p)}+\frac{1}{2}\sum_{j\in J_-}\beta_j\widehat{s}_j^{(p)}=\frac{1}{2}g^{-1}\circ\widehat{\varphi}^{(k)}\nonumber \\
    \label{lsteq}&\frac{1}{2}\sum_{j\in J_+}\beta_j\widehat{s}_j^{(p)}=\frac{1}{2}\sum_{j\in J_-}\left(-\beta_j\right)\widehat{s}_j^{(p)}+\frac{1}{2}g^{-1}\circ\widehat{\varphi}^{(k)}  
\end{align}
Now as $\beta_j|_{j\in J_+}>0$ and $\left(-\beta_j\right)|_{j\in J_-}>0$, by assumption (2), $\sum_{j\in J_+}\beta_j\widehat{s}_j^{(p)}\in\widehat{S}^{(p)}$ and $\sum_{j\in J_-}\left(-\beta_j\right)\widehat{s}_j^{(p)}\in\widehat{S}^{(p)}$. Also, $g^{-1}\circ\widehat{\varphi}^{(k)}\in\widehat{\S}^{(k)}$. Now,
\begin{align}
    &\text{LHS of equation \eqref{lsteq}}\nonumber\\
    &=\frac{1}{2}\sum_{j\in J_+}\beta_j\widehat{s}_j^{(p)}\in\widehat{\S}^{(p)}\nonumber\\
    &\text{RHS of equation \eqref{lsteq}}\nonumber\\
    &=\frac{1}{2}\sum_{j\in J_-}\left(-\beta_j\right)\widehat{s}_j^{(p)}+\left(1-\frac{1}{2}\right)g^{-1}\circ\widehat{\varphi}^{(k)} \notin\widehat{\S}^{(k)}\cup\widehat{\S}^{(p)}\nonumber\\
    &~~~~~~~~~~~~~~~~~~~~~~~~~~~~~~~~~~~~~~~~~~~~~~~\text{(by using Claim (1))}\nonumber
\end{align}
which is a contradiction. Therefore, there exists no $\beta_j\in\mathbb{R}$ such that
\begin{align}
\sum_{j\in J}\beta_j\widehat{s}_j^{(p)}= g^{-1}\circ\widehat{\varphi}^{(k)}  \nonumber
\end{align}
which implies, the main claim is true, i.e., $\widehat{\S}^{(k)}\cap\widehat{\V}^{(p)}=\varnothing,~~\forall~k\neq p$. Note that, $\widehat{\S}^{(k)}$ here does not contain the origin because the generative models in equations~\eqref{eq:1dgenerative_model} and \eqref{eq:2dgenerative_model} do not allow for zero elements.
\end{proof}

\clearpage
\newpage
\onecolumn
\setcounter{table}{0} \renewcommand{\thetable}{A.\arabic{table}}

\section{Standard deviation of test accuracy}

\begin{table*}[!hbt]
\centering
\caption{Standard deviation of percentage test accuracy in the Chinese printed character dataset.}
\begin{tabular}{cccccc}
\hline
            & \multicolumn{5}{c}{No. of training samples (per class)} \\ \cline{2-6} 
            & 1         & 2        & 4        & 8         & 16        \\ \hline
Resnet      & 0.08       & 0.21     & 2.45     & 4.34      & 0.17      \\
Shallow-CNN & 0.04      & 0.06     & 0.17     & 0.82      & 2.21      \\
VGGnet      & 0         & 0        & 0.87     & 20.32     & 41.54     \\
Proposed    & 0.21      & 0.28     & 0.04     & 0         & 0         \\ \hline
\end{tabular}
\end{table*}

\begin{table*}[!hbt]
\centering
\caption{Standard deviation of percentage test accuracy in the MNIST dataset.}
\begin{tabular}{cccccccccccccc}
\hline
            & \multicolumn{12}{c}{No. of training samples (per class)} \\ \cline{2-14} 
            & 1         & 2        & 4        & 8         & 16    & 32 & 64 & 128 & 256 & 512 & 1024 & 2048 &4096    \\ \hline
Resnet      & 3.29&    4.05&    3.03 &    9.13 &    8.04 &    2.01 &    1.85&    0.95&    0.45 & 0.75  &  0.12 &    0.15 &    0.06     \\
Shallow-CNN & 4.08&    6.89&    2.64&    1.12&    3.90&    0.96&    1.42&    0.49&    0.31&0.23 &   0.09&    0.09&    0.07      \\
Proposed    & 5.25&    7.97&    4.27&    1.21&    1.48&    0.50&    0.33&    0.20&    0.16&0.08&    0.11&    0.08&    0.07         \\ \hline
\end{tabular}
\end{table*}

\begin{table*}[!hbt]
\centering
\caption{Standard deviation of percentage test accuracy in the Affine-MNIST dataset.}
\begin{tabular}{cccccccccccccc}
\hline
            & \multicolumn{12}{c}{No. of training samples (per class)} \\ \cline{2-14} 
            & 1         & 2        & 4        & 8         & 16    & 32 & 64 & 128 & 256 & 512 & 1024 & 2048 &4096    \\ \hline
Resnet      & 1.45 &         1.08&          0.64    &      1.08 &         6.48 &3.86          &3.95&          1.56&          0.27&          0.21&0.16&          0.21&          0.09    \\
Shallow-CNN & 1.18   &       1.31 &         1.09   &       0.67     &     2.58  & 2.06      &    2.95    &      1.65  &        1.03  &        0.38 &0.45    &      0.33      &    0.27   \\
VGGnet & 2.59    &      2.99    &      3.17     &     4.67      &    4.78 &   2.97 &         1.35 &         0.99        &  0.45 &         0.33 & 0.18 &         0.17    &      0.12    \\
Proposed    & 3.27  &        5.29   &       2.31    &      2.30 &         1.33 &   0.59 &         0.38 &         0.22 &         0.15    &      0.1 &  0.08 &         0.08 &         0.08     \\ \hline
\end{tabular}
\end{table*}

\begin{table*}[!hbt]
\centering
\caption{Standard deviation of percentage test accuracy in the Optical OAM dataset.}
\begin{tabular}{cccccccccccccc}
\hline
            & \multicolumn{10}{c}{No. of training samples (per class)} \\ \cline{2-11} 
            & 1         & 2        & 4        & 8         & 16    & 32 & 64 & 128 & 256 & 512     \\ \hline
Resnet      & 1.71 &         4.31 &         2.60    &      1.39 &         0.84 & 0.78  &        0.22    &      0.05 &         0.04 &         0.16    \\
Shallow-CNN & 2.80 &         1.03 &         2.64    &      1.72 &         4.29 & 0.81  &        0.45    &      0.10 &         0.18 &         0.12   \\
VGGnet & 1.64     &     1.63    &     13.30 &        13.11 &         2.81 & 1.97    &      0.77     &     0.44  &        0.12 &         0.05    \\
Proposed    & 2.40 &         1.73 &         0.66    &      0.54 &         0.28 & 0.09   &       0.02    &      0.01     &     0.01      &    0.01     \\ \hline
\end{tabular}
\end{table*}

\begin{table*}[!hbt]
\centering
\caption{Standard deviation of percentage test accuracy in the Sign language dataset.}
\begin{tabular}{cccccccccccccc}
\hline
            & \multicolumn{10}{c}{No. of training samples (per class)} \\ \cline{2-11} 
            & 1         & 2        & 4        & 8         & 16    & 32 & 64 & 128 & 256 & 512     \\ \hline
Resnet      & 9.08  &       15.14 &         9.24    &     12.26     &    11.80 & 7.02   &       4.94    &      2.03 &         0.76 &         0.05    \\
Shallow-CNN & 9.87 &         4.49  &        3.07    &      1.62     &     5.93 &  7.58      &    1.62 &         1.22    &      0.03 &            0   \\
VGGnet & 8.83   &      15.48    &     16.35     &    19.79  &        1.76 &  5.67 &         3.76        &  1.22     &     1.39 &         0.27     \\
Proposed    & 12.26 &         9.68  &        6.85  &        4.18        &  1.73  &  0.78        &  0.12 &            0  &           0   &          0    \\ \hline
\end{tabular}
\end{table*}

\begin{table*}[!hbt]
\centering
\caption{Standard deviation of percentage test accuracy in the OASIS brain MRI dataset.}
\begin{tabular}{ccccccc}
\hline
            & \multicolumn{5}{c}{No. of training samples (per class)} \\ \cline{2-7} 
            & 1         & 2        & 4        & 8         & 16  & 32      \\ \hline
Resnet      & 4.40  &       11.58  &       11.69    &     12.09     &    12.51 &         7.96   \\
Shallow-CNN & 18.12     &    17.42      &    8.28 &         5.49        & 12.68  &        6.37     \\
VGGnet      & 5.07 &         5.12   &       4.06    &      4.50     &    11.99  &       10.02    \\
Proposed    & 7.56 &        5.43  &        3.56 &        2.96        &  2.26 &         0.85      \\ \hline
\end{tabular}
\end{table*}

\end{document}